\newtheorem{proposition}{Proposition}
\DeclareMathOperator*{\argmin}{argmin}
\ifcvprfinal\pagestyle{empty}\fi
\begin{document}

\title{Deeply-supervised Knowledge Synergy}

\author{Dawei Sun$^{1,2*}$ ~~ Anbang Yao$^{1*}$ ~~ Aojun Zhou$^1$ ~~ Hao Zhao$^{1,2}$\\
\and
$^1$Intel Labs China\quad $^2$Tsinghua University\\
\tt\small \{dawei.sun, anbang.yao, aojun.zhou, hao.zhao\}@intel.com\\
}

\maketitle
\thispagestyle{empty}

\begin{abstract}
\let\thefootnote\relax\footnotetext{*Equal contribution. This work was done when Dawei Sun was an intern at Intel Labs China, supervised by Anbang Yao who is responsible for correspondence. Interns Aojun Zhou and Hao Zhao contributed to early theoretical analysis.}
Convolutional Neural Networks (CNNs) have become deeper and more complicated compared with the pioneering AlexNet. However, current prevailing training scheme follows the previous way of adding supervision to the last layer of the network only and propagating error information up layer-by-layer. In this paper, we propose Deeply-supervised Knowledge Synergy (DKS), a new method aiming to train CNNs with improved generalization ability for image classification tasks without introducing extra computational cost during inference. Inspired by the deeply-supervised learning scheme, we first append auxiliary supervision branches on top of certain intermediate network layers. While properly using auxiliary supervision can improve model accuracy to some degree, we go one step further to explore the possibility of utilizing the probabilistic knowledge dynamically learnt by the classifiers connected to the backbone network as a new regularization to improve the training. A novel synergy loss, which considers pairwise knowledge matching among all supervision branches, is presented. Intriguingly, it enables dense pairwise knowledge matching operations in both top-down and bottom-up directions at each training iteration, resembling a dynamic synergy process for the same task. We evaluate DKS on image classification datasets using state-of-the-art CNN architectures, and show that the models trained with it are consistently better than the corresponding counterparts. For instance, on the ImageNet classification benchmark, our ResNet-152 model outperforms the baseline model with a $1.47\%$ margin in Top-1 accuracy. Code is available at \url{https://github.com/sundw2014/DKS}.

\end{abstract}

\section{Introduction}
Deep Convolutional Neural Networks (CNNs) have large numbers of learnable parameters, which makes them have much better capability in fitting training data than traditional machine learning methods. Along with the growing availability of training resources including large-scale datasets, powerful hardware platforms and effective development tools, CNNs have become the dominant learning models for a variety of visual recognition tasks~\cite{ref01,ref02,ref03,ref04}. In order to get more compelling performance, CNNs~\cite{ref14,ref15,ref16,ref17,ref18,ref19,ref20} are designed to be considerably deeper and more complicated in comparison to the seminal AlexNet~\cite{ref01} which has 8 layers and achieved groundbreaking results in the ImageNet classification competition 2012~\cite{ref21}. Despite that modern CNNs widely use various engineering techniques such as careful hyper-parameter tuning~\cite{ref14}, aggressive data argumentation~\cite{ref18,ref13}, effective normalization~\cite{ref07,ref08} and sophisticated connection path~\cite{ref15,ref17,ref18,ref19,ref20} to ease network training, their training remains to be difficult.

\begin{figure}
\centering
\includegraphics[width = 0.95\linewidth]{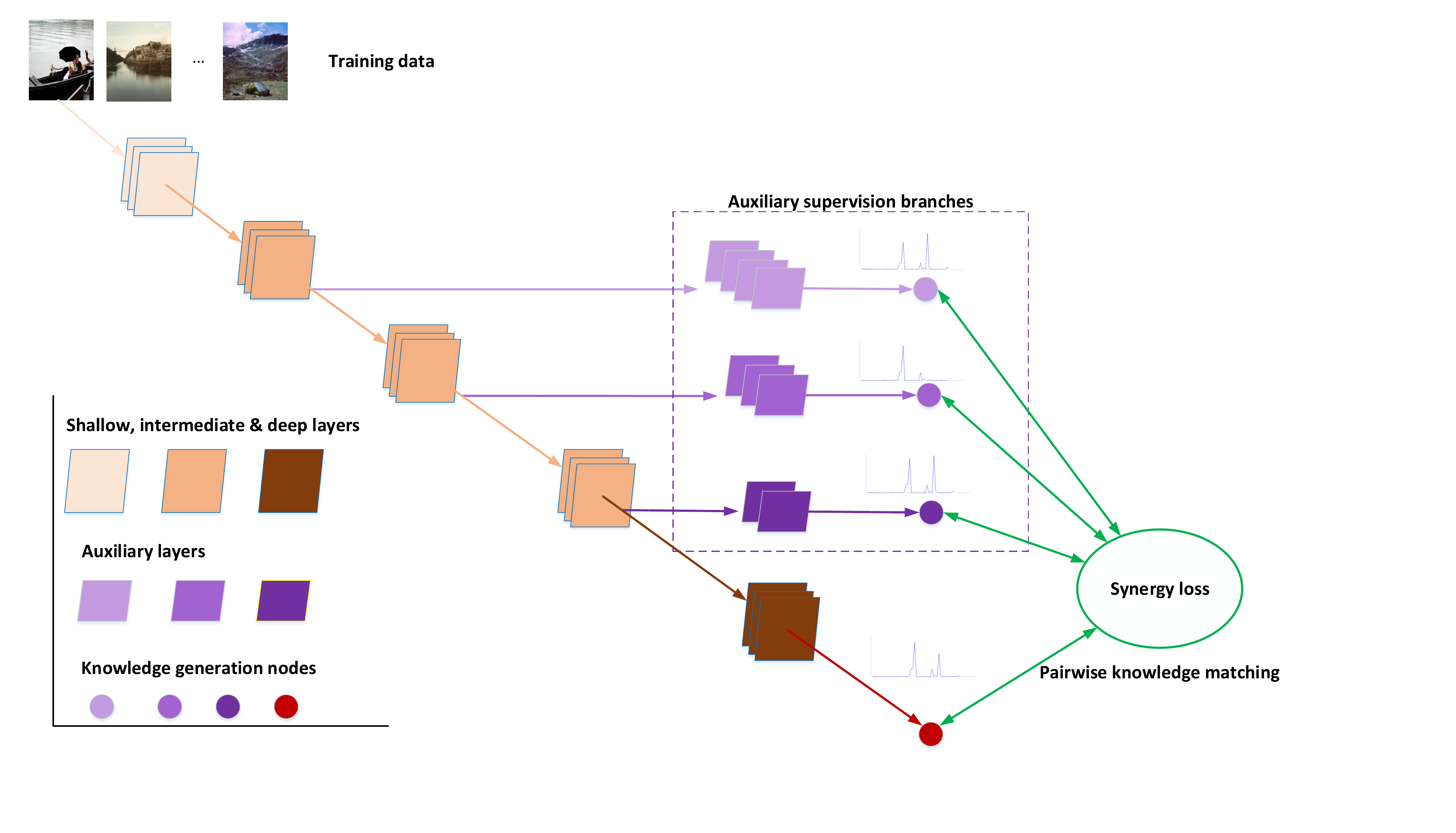}
\caption{Illustration of the proposed method. In the figure, we add three auxiliary supervision branches on top of some intermediate layers of the backbone network. Every branch will output a class probability distribution conditioned on the training data, which is used as the knowledge. We use circles to indicate the nodes for calculating these knowledge outputs, and propose a synergy loss term to enable the pair-wise matching among them. Best viewed electronically.}
\label{fig:01}
\vskip -0.1 in
\end{figure}

We notice that state-of-the-art CNN models such as ResNet~\cite{ref15}, WRN~\cite{ref16}, DenseNet~\cite{ref17}, ResNeXt~\cite{ref18}, SENet~\cite{ref19}, DPN~\cite{ref20}, MobileNet~\cite{ref27,ref29} and ShuffleNet~\cite{ref28,ref28-2} adopt the training scheme of AlexNet. More specifically, during training, the supervision is only added to the last layer of the network and the training error is back propagated from the last layer to earlier layers. Because of the increased complexity in network depth, building blocks and network topologies, this might pose a risk of insufficient representation learning, especially to the layers from which there are long connection paths to the supervision layer. This problem may be alleviated by the deeply-supervised learning scheme proposed in~\cite{ref12} and~\cite{ref22} independently. Szegedy et al.~\cite{ref12} add auxiliary classifiers to two intermediate layers of their proposed GoogLeNet, while Lee et al.~\cite{ref22} propose to add auxiliary classifiers to all hidden layers of the network. During network training, although different types of auxiliary classifiers are used in these two methods, they adopt the same optimization strategy in which the training loss is the weighted sum of the losses of all auxiliary classifiers and the loss of the classifier connected to the last layer. Such methodology has proven to be notably effective in combating the vanishing gradient problem and overcoming the convergence issue for training some old deep classification networks. However, modern CNN backbones usually have no convergence issue, and rarely use auxiliary classifiers. Recently, Huang et al.~\cite{ref23} present a two-dimensional multi-scale CNN architecture using early-exit classifiers for cost-aware image classification. In~\cite{ref23}, empirical results show that naively attaching simple auxiliary classifiers to the early layers of a state-of-the-art CNN such as ResNet or DenseNet leads to decreased performance, but this issue can be alleviated with a combination of multi-scale features and dense connections from the architecture design perspective.

In this paper, we revisit the deeply-supervised learning methodology for image classification tasks, and present a new method called Deeply-supervised Knowledge Synergy (DKS) targeting to train state-of-the-art CNNs with improved accuracy and without introducing extra computational cost during inference. Inspired by the aforementioned works ~\cite{ref12,ref22,ref23}, we first append auxiliary supervision branches on top of certain intermediate layers during network training as illustrated in Fig.~\ref{fig:01}. We show that using carefully designed auxiliary classifiers can improve the accuracy of state-of-the-art CNNs to a certain extent. This empirically indicates that the information from the auxiliary supervision is beneficial in regularizing the training of modern CNNs. We conjecture there may still exist room for performance improvement by enabling explicit information interactions among all supervision branches connected to the backbone network, thus we go one step further to explore the possibility of utilizing the knowledge (namely the class probability outputs evaluated on the training data) dynamically learnt by the auxiliary classifiers and the classifier added to the last network layer as a new regularization to improve the training. In the optimization, a novel synergy loss, which considers pairwise knowledge matching among all supervision branches, is added to the training loss. This loss enables dense pairwise knowledge matching operations in both top-down and bottom-up directions at each training step, resembling a dynamic synergy process for the same task. We evaluate the proposed method on two well-known image classification datasets using the most prevalent CNN architectures including ResNet~\cite{ref15}, WRN~\cite{ref16}, DenseNet~\cite{ref17} and MobileNet~\cite{ref27} architectures. We show that the models trained with our method have impressive accuracy improvements compared with their respective baseline models. For example, on the challenging ImageNet classification dataset, even to very deep ResNet-152 architecture, there is a $1.47\%$ improvement in Top-1 accuracy.

\section{Related Work}

Here, we summarize related approaches in the literature, and analyze their relations and differences with our method.


\textbf{Deeply-Supervised Learning.} The deeply-supervised learning methodology~\cite{ref12,ref22} was released in 2014. It uses auxiliary classifiers connected to the hidden layers of the network to address the convergence problem when training some old deep CNNs for image classification tasks. Recently, it has also been used in other visual recognition tasks such as edge detection~\cite{ref32}, human pose estimation~\cite{ref33}, scene parsing~\cite{ref34}, semantic segmentation~\cite{ref34-2}, keypoint localization~\cite{ref35}, automatic delineation~\cite{ref48} and travel time estimation~\cite{ref36}. Despite these recent advances in its new applications, modern CNN classification models rarely use auxiliary classifiers. As reported in~\cite{ref23}, directly appending simple auxiliary classifiers on top of the early layers of a state-of-the-art network such as ResNet or DenseNet hurts its performance. In this paper, we present DKS, a new deeply-supervised learning method for image classification tasks, which shows impressive accuracy improvements when training state-of-the-art CNNs.

\textbf{Knowledge Transfer.} In the recent years, Knowledge Transfer (KT) research has been attracting increasing interest. A pioneering work is Knowledge Distillation (KD)~\cite{ref37} in which the soft outputs from a large teacher model or an ensemble of teacher models are used to regularize the training of a smaller student network.~\cite{ref38},~\cite{ref39} and~\cite{ref37-2} further show that intermediate feature representations can also be used as hints to enhance knowledge distillation process. KD techniques have also been used in other tasks, for instance, improving the performance of low-precision CNNs for image classification~\cite{ref40} and designing multiple-stream CNNs for video action recognition~\cite{ref37-5}. Unlike KD and its variants in which knowledge is only transferred from teacher models to a student model,~\cite{ref41} extends KD by presenting a mutual learning strategy, showing that the knowledge of the student model is also helpful to improve the accuracy of the teacher model. Later, this idea was used in face re-identification~\cite{ref41-3} and joint human parsing and pose estimation~\cite{ref41-2}. Li and Hoiem~\cite{ref42} address the problem of adapting a trained neural network model to handle new vision tasks while preserving the old knowledge through a combination of KD and fine-tuning. An improved method is proposed in~\cite{ref42-2}. Qiao et al.~\cite{ref37-3} propose a deep co-training method for semi-supervised image classification. In their method, all models are considered as students and trained with different data views containing adversarial samples. In this paper, the proposed deeply-supervised knowledge synergy method is a new form of knowledge transfer within one single neural network, which differs from the aforementioned methods both in focus and formulation.

\textbf{CNN Regularization.} ReLU~\cite{ref05}, Dropout~\cite{ref06} and BN~\cite{ref07} are proven to be the keys for modern CNNs to combat over-fitting or accelerate convergence. Because of this, many improved variants~\cite{ref08,ref09,ref10,ref06-2,ref06-3} have been proposed recently. Over-fitting can also be reduced by synthetically increasing the size of existing training data via augment transformations such as random cropping, flipping, scaling, color manipulation and linear interpolation~\cite{ref01,ref11,ref12,ref13}. In addition, pre-training~\cite{ref14} can assist the early stages of the neural network training. These methods are widely used in modern CNN architecture design and training. Our method is compatible with them. As can be seen in Fig.~\ref{fig:03}, the model trained with DKS has the highest training error but the lowest test error, showing that our method behaves like a regularizer and reduces over-fitting for ResNet-18.

\section{The Proposed Method}

In this section, we present the formulation of our method, highlight its insight, and detail its implementation.

\subsection{Deeply-Supervised Learning}

We begin with the formulation of the deeply-supervised learning scheme as our method is based on it. Let $ W_{c} $ be the parameters of a $L$-layer CNN model that needs to be learnt. Let $D = \{(x_i, y_i) | 1\leq i \leq N\}$ be an annotated data set having $N$ training samples collected from $K$ image classes. Here, $x_i$ is the $i^{th}$ training sample and $y_i$ is the corresponding ground truth label (a one-hot vector with $K$ dimensions). Let $f(W_{c}, x_i)$ be the $K$-dimensional output vector of the CNN model for a training sample $x_i$. For the standard training scheme, the supervision is only added to the last layer of the network, and the optimization objective can be defined as
\begin{equation}\label{eq:01}
\begin{aligned}
\argmin_{W_{c}} \quad& {L_{c}}(W_{c}, D) + \lambda{R(W_{c})}, \\
\end{aligned}
\end{equation}
where $L_{c}$ is the default loss, $R$ is the regularization term, and $\lambda$ is a positive coefficient. Here, $L_{c}$ is defined as
\begin{equation*}\label{eq:02}
\begin{aligned}
L_{c}(W_{c}, D) = \frac{1}{N} \sum\limits_{i=1}^NH(y_i,f(W_{c}, x_i)), \\
\end{aligned}
\end{equation*}
where $H$ is a cross-entropy cost function
\begin{equation*}\label{eq:03}
\begin{aligned}
H(y_i,f(W_{c}, x_i)) = -\sum\limits_{k=1}^Ky_i^k\log f^{k}(W_{c}, x_i). \\
\end{aligned}
\end{equation*}
As $\lambda R$ is a default term and has no relation with our method, we omit this term in the following description for simplicity. Now, the objective function~(\ref{eq:01}) can be reduced into
\begin{equation}\label{eq:04}
\begin{aligned}
\argmin_{W_{c}} \quad& {L_{c}}(W_{c}, D). \\
\end{aligned}
\end{equation}
This optimization problem can be readily solved by SGD and its variants~\cite{ref44,ref45,ref46}. To the best of our knowledge, most of the well-known CNNs~\cite{ref01,ref14,ref15,ref16,ref17,ref18,ref27,ref29,ref19,ref20,ref28,ref28-2,ref30,ref31,ref31-2} adopt this optimization scheme in the model training. By contrast, the deeply-supervised learning scheme explicitly proposed in~\cite{ref22} adds auxiliary classifiers to all hidden layers of the network during training. Let $W_{a} = \{w_{a}^{l} | 1\leq l \leq L-1\}$ be a set of auxiliary classifiers attached on the top of every hidden layer of the network. Here, $w_{a}^{l}$ denotes the parameters of the auxiliary classifier added to the $l^{th}$ hidden layer. Let $f(w_{a}^{l}, W_{c}, x_i)$ be the $K$-dimensional output vector of the $l^{th}$ auxiliary classifier. Without loss of generality, the optimization objective of the deeply-supervised learning scheme can be defined as
\begin{equation}\label{eq:05}
\begin{aligned}
\argmin_{W_{c}, W_{a}} \quad& {L_{c}}(W_{c}, D) + L_{a}(W_{a}, W_{c}, D), \\
\end{aligned}
\end{equation}
where
\begin{equation*}\label{eq:06}
\begin{aligned}
L_{a}(W_{a}, W_{c}, D) = \frac{1}{N} \sum\limits_{i=1}^N\sum\limits_{l=1}^{L-1}\alpha_lH(y_i, f(w_{a}^{l}, W_{c}, x_i)). \\
\end{aligned}
\end{equation*}
The auxiliary loss $L_{a}$ is the weighted sum of the losses of all auxiliary classifiers evaluated on the training set and $\alpha_l$ weights the loss of the $l^{th}$ auxiliary classifier. By introducing auxiliary loss $L_{a}$, the deeply-supervised learning scheme allows the network to gather gradients not only from the last layer supervision but also from the hidden layer supervision during training. This is thought to combat the vanishing gradient problem and enhance convergence~\cite{ref22,ref12}.

As for the contemporary work~\cite{ref12}, its optimization objective can be thought as a special case of~(\ref{eq:05}) as it only adds auxiliary classifiers to two intermediate layers of the proposed GoogLeNet. The other difference lies in the structure of auxiliary classifiers. In the experiments,~\cite{ref22} used simple classifiers with a zero-ing strategy to dynamically control the value of $\alpha_l$ during training, while~\cite{ref12} used more complex classifiers with a fixed value of $\alpha_l$. We find that setting a fixed value for $\alpha_l$ gives similar performance to the zero-ing strategy when training state-of-the-art CNNs, thus we use fixed values for $\alpha_l$ in our implementation.

\subsection{Deeply-supervised Knowledge Synergy}

Now, we present the formulation of our DKS which further develops the deeply-supervised learning methodology from a new perspective. DKS also uses auxiliary classifiers connected to some hidden layers of the network, but unlike existing methods, it introduces explicit information interactions among all supervision branches. Specifically, DKS uses the knowledge (i.e., the class probability outputs evaluated on the training data) dynamically learnt by all classifiers to regularize network training. Its core contribution is a novel synergy loss which enables dense pairwise knowledge matching among all classifiers connected to the backbone network, making optimization more effective.

In this section, we follow the notations in the last section. We only add auxiliary classifiers to certain hidden layers. Let $A \subseteq \{1,2,\cdot\cdot\cdot,L-1\}$ be a pre-defined set with $|A|$ layer indices, indicating where auxiliary classifiers are added. Let $\hat A = A \cup \{L\}$, where $L$ is the index of the last layer of the network, so that $\hat{A}$ indicates the locations of all classifiers connected to the network including both the auxiliary ones and the original one. Let $B \subseteq \hat{A} \times \hat{A}$ be another pre-defined set with $|B|$ pairs of layer indices, indicating where pair-wise knowledge matching operations are activated.

Now, following the definition of~(\ref{eq:05}), the optimization objective of our DKS is defined as
\begin{equation}\label{eq:09}
\resizebox{.9\hsize}{!}{$
\begin{aligned}
\argmin_{W_{c},W_{a}} \quad& {L_{c}}(W_{c},D)+L_{a}(W_{a},W_{c},D)+L_{s}(W_{a},W_{c},D).\\
\end{aligned}$}
\end{equation}
Here, the default loss $L_{c}$ is the same as in~(\ref{eq:05}), the auxiliary loss $L_{a}$ is defined as
\begin{equation*}\label{eq:10}
\begin{aligned}
L_{a}(W_{a}, W_{c}, D) = \frac{1}{N} \sum\limits_{i=1}^N\sum\limits_{l \in A}\alpha_lH(y_i, f(w_{a}^{l}, W_{c}, x_i)), \\
\end{aligned}
\end{equation*}
and the proposed synergy loss $L_{s}$ is defined as
\begin{equation*}\label{eq:11}
\begin{aligned}
L_{s}(W_{a}, W_{c}, D) = \frac{1}{N} \sum\limits_{i=1}^N\sum\limits_{(m,n) \in B}H(f_m,f_n).\\
\end{aligned}
\end{equation*}
The pairwise knowledge matching from the classifier $m$ to $n$ is evaluated with $H(f_m,f_n)$ which is defined as
\begin{equation*}\label{eq:12}
\begin{aligned}
H(f_m,f_n) = -\beta_{mn}\sum\limits_{k=1}^Kf_m^k\log f_n^k, \\
\end{aligned}
\end{equation*}
where $f_m$ and $f_n$ are the class probability outputs of classifier $m$ and $n$ evaluated on the training sample $x_i$ respectively, and $\beta_{mn}$ weights the loss of the pairwise knowledge matching from the classifier $m$ to $n$. We use a Softmax function to compute class probability. In the experiments, we set $\alpha_l=1$, $\beta_{mn}=1$ and keep them fixed, which means there is no extra hyper-parameter in the optimization of our method compared with the optimization~(\ref{eq:04}) and~(\ref{eq:05}). For the synergy loss, the knowledge matching between any two classifiers is a modified cross-entropy loss function with a soft target. In principle, taking the current class probability outputs from the classifier $m$ as soft labels (which are considered as constant values and the gradients w.r.t. them will not be calculated in the back-propagation), it forces the classifier $n$ to mimic the classifier $m$. In this way, the knowledge currently learnt by the classifier $m$ can be transferred to the classifier $n$. We call this term a directional supervision. Intriguingly, enabling dense pairwise knowledge matching operations among all supervision branches connected to the backbone network resembles a dynamic synergy process for the same task.

\begin{figure}
\centering
\includegraphics[height=3.5cm]{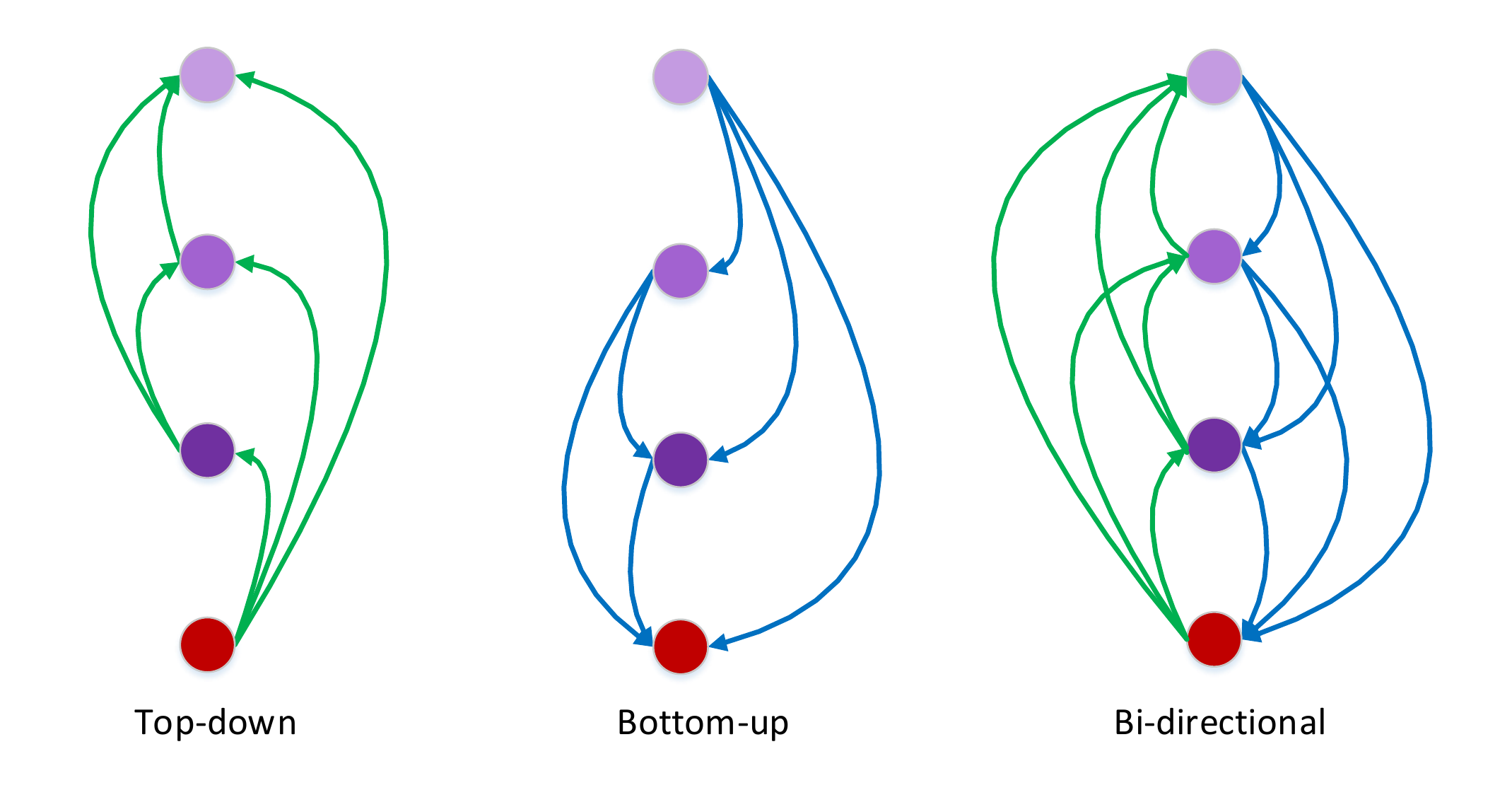}
\caption{Illustration of three pairwise knowledge matching strategies. In each strategy, the red circle denotes the classifier connected to the last layer of the network, and the purple circles denote three auxiliary classifiers connected to certain intermediate layers, and the curved arrows represent the pairwise knowledge matching directions.}
\label{fig:02}
\vskip -0.1 in
\end{figure}

\textbf{Pairwise Knowledge Matching}. For DKS, a critical question is how to configure the knowledge matching pairs (i.e., set $B$). We provide three options including the top-down, bottom-up and bi-directional strategies, as illustrated in Fig.~\ref{fig:02}. With the top-down strategy, only the knowledge of the classifiers connected to the deep layers of a backbone network are used to guide the training of the classifiers added to the earlier layers. The bottom-up strategy reverses this setting and the bi-directional strategy includes both of them. A comparison study (see experiments section) shows that the bi-directional strategy has the best performance, thus we adopt it in the final implementation.

\textbf{Auxiliary Classifiers}. Another basic question for DKS is how to design the structure of auxiliary classifiers. Although the deeply-supervised learning scheme has proven to be effective in addressing the convergence issue when training some old deep networks for image classification tasks~\cite{ref22}, state-of-the-art CNNs such as ResNet and DenseNet are known to be free of convergence issue, even for models having hundreds of layers. In view of this, directly adding simple auxiliary classifiers to the hidden layers of the network might not be helpful, which has been empirically verified by~\cite{ref23} and~\cite{ref34-2}. From the CNN architecture design perspective,~\cite{ref12} and~\cite{ref23} propose to add complex auxiliary classifiers to some intermediate layers of the network to alleviate this problem. Following them, in the experiments, we append relatively complex auxiliary supervision branches on top of certain intermediate layers during network training. Specifically, every auxiliary branch is composed of the same building block (e.g., residual block in ResNet) as in the backbone network. As empirically verified in~\cite{ref23}, early layers lack coarse-level features which are helpful for image-level classification. In order to address this problem, we use a heuristic principle making the paths from the input to every classifier have the same number of down-sampling layers. Comparative experiments show that these carefully designed auxiliary supervision branches can improve final model performance to some extent but the gain is relatively minor. By enabling dense pairwise knowledge matching via the proposed synergy loss, we achieve much better results. Fig.~\ref{fig:03} shows some illustrative results, and more results can be found in experiments section.

\begin{figure}
\centering
\includegraphics[width=.95\linewidth]{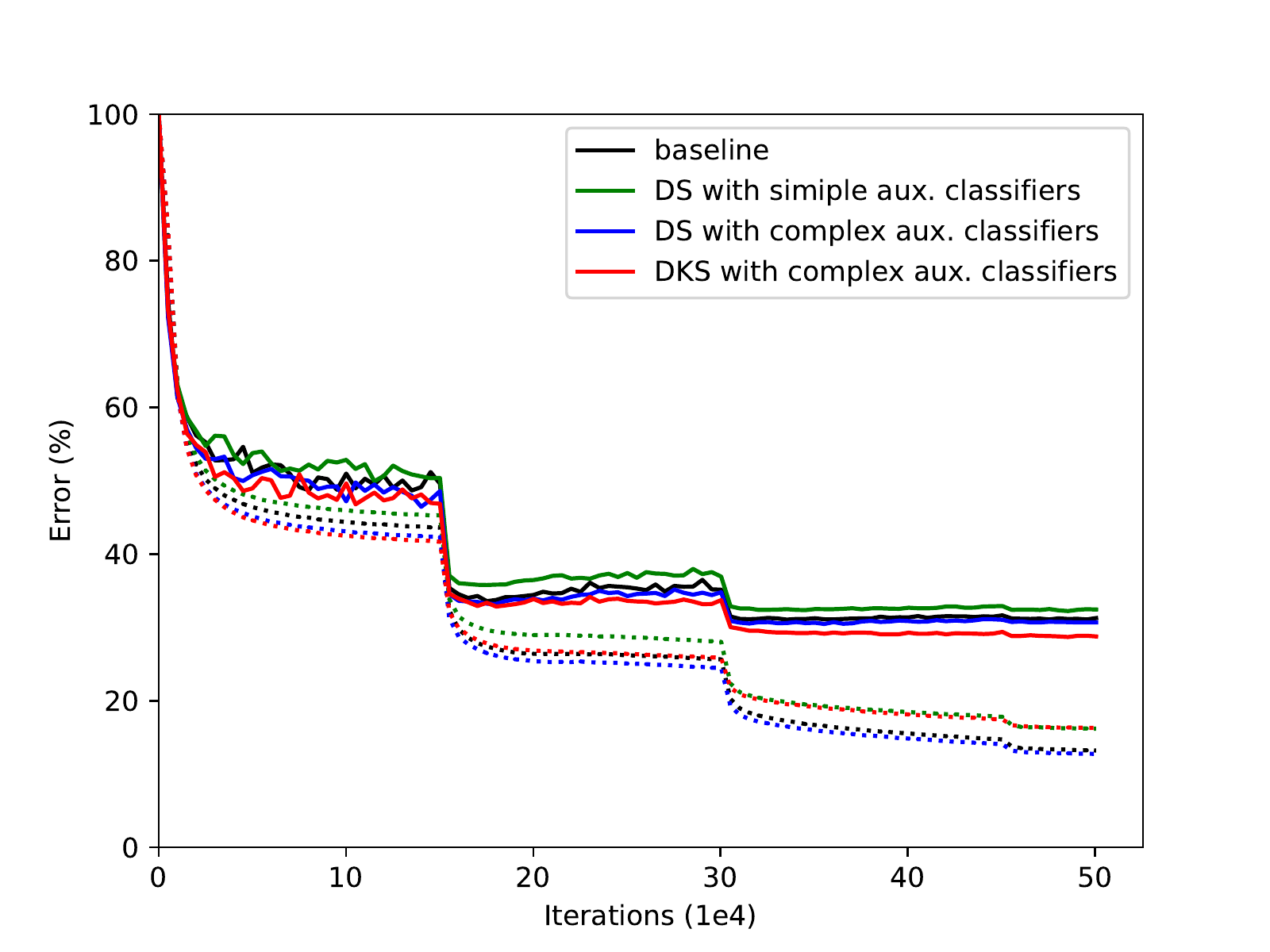}
\caption{Curves of Top-1 training error (dashed line) and test error (solid line) of the ResNet-18 models trained on the ImageNet classification dataset. Compared with the baseline model, simple auxiliary classifiers (added after the block Conv3$\_$x and Conv4$\_$x) lead to $1.17\%$ drop in Top-1 accuracy and complex designs bring a $0.60\%$ improvement, while our method achieves $2.38\%$ gain. Remarkably, our method converges with the lowest accuracy on training set but achieves the best accuracy on test set, showing better capability in suppressing over-fitting.}
\label{fig:03}
\vskip -0.1 in
\end{figure}

\textbf{Comparison with Knowledge Distillation}. In the DKS, the pairwise knowledge matching is inspired by the knowledge distillation idea popularly used in knowledge transfer~\cite{ref37,ref37-2,ref38,ref39,ref40,ref41,ref42,ref42-2,ref37-3}. Here, we clarify their differences. First, our method differs with them in focus. This line of research mainly addresses the network compression problem following a student-teacher framework, but our method focuses on advancing the training of state-of-the-art CNNs by further developing the deeply-supervised learning methodology. Second, our method differs with them in formulation. Under the student-teacher framework, large teacher models are usually supposed to be available beforehand, and the optimization is defined to use the soft outputs from teacher models to guide the training of smaller student networks. That is, teacher models and student models are separately optimized, and there is no direct relation between them. In our method, auxiliary classifiers share different-level feature layers of the backbone network, and they are jointly optimized with the classifier connected to the last layer. In this paper, we also conduct experiments to compare their performance.

To the best of our knowledge, DKS is the first work that makes a compact association of deeply-supervised learning and knowledge distillation methodologies, enabling the transfer of currently learned knowledge between different layers in a deep CNN model. In the supplemental materials, we provide some theoretical analysis attempting to better understand DKS.

\section{Experiments}

In this section, we first apply DKS to train state-of-the-art CNNs on the CIFAR-100~\cite{ref50} and ImageNet~\cite{ref21} classification datasets, and compare it with the standard training scheme and the Deeply-Supervised (DS) learning scheme. We then provide experiments for a deep analysis of DKS and more comprehensive comparisons. All algorithms are implemented with PyTorch~\cite{ref49}. For fair comparisons, the experiments of these three methods are conducted with exactly the same settings for data pre-processing, batch size, number of training epochs, learning rate scheduling, etc.

\subsection{Experiments on CIFAR-100}

CIFAR-100 dataset~\cite{ref50} contains 50000 training images and 10000 test images, where instances are $32\times32$ color images drawn from 100 object classes. We use the same data pre-processing method as in~\cite{ref15,ref22}. For training, images are padded with 4 pixels to both sides first, and then $32\times32$ crops are randomly sampled from the padded images or their horizontal flips, and are finally normalized with the per-channel mean and std values. For evaluation, we report the error on the original-sized test images.

\textbf{Backbone Networks and Implementation Details.} We consider four state-of-the-art CNN architectures including: (1) ResNets~\cite{ref15} with depth 32 and 110; (2) DenseNets~\cite{ref17} with depth 40/100 and growth rate 12; (3) WRNs~\cite{ref16} with depth 28/28 and widening factor 4/10; (4) MobileNet~\cite{ref27} as used in~\cite{ref41}. We use the released code by the authors and follow the standard settings to train each respective backbone. During training, for ResNets and MobileNet, we use SGD with momentum, and we set the batch size as 64, the weight decay as 0.0001, the momentum as 0.9 and the number of training epochs as 200. The initial learning rate is 0.1, and it is divided by 10 every 60 epochs. For DenseNets, we use SGD with Nesterov momentum, and we set the batch size as 64, the weight decay as 0.0001, the momentum as 0.9 and the number of training epochs as 300. The initial learning rate is set to 0.1, and is divided by 10 at 50\% and 75\% of the total number of training epochs. For WRNs, we use SGD with momentum, and we set the batch size as 128, the weight decay as 0.0005, the momentum as 0.9 and the number of training epochs as 200. The initial learning rate is set to 0.1, and is divided by 5 at 60, 120 and 160 epochs. Inspired by~\cite{ref12,ref23}, we append three auxiliary classifiers to certain intermediate layers of these CNN architectures. Specifically, we add each auxiliary classifier after the corresponding building block having a down-sampling layer. All auxiliary classifiers have the same building blocks as in the backbone networks, a global average pooling layer and a fully connected layer. The differences are the number of building blocks and the number of convolutional filters (see supplementary materials for details). All models are trained on a server using 1 GPU. For each network, we run each method 5 times and report `mean(std)' error rates.

\textbf{Results Comparison.} Results are summarized in Table~\ref{1} where baseline denotes the standard training scheme, and DS denotes the deeply-supervised learning scheme~\cite{ref12,ref22} using our designed auxiliary classifiers. Generally, with our designed auxiliary classifiers, DS improves model accuracy in all cases compared to the baseline method, and its accuracy gain ranges from $0.08\%$ to $0.92\%$. Comparatively, our method performs the best on all networks, bringing at least $0.67\%$ and at most $3.08\%$ accuracy gain to DS. As the network goes to much deeper (e.g., ResNet-110 and DenseNet-100)/much wider (e.g., WRN-28-10)/much thinner (e.g., MobileNet), our method also has noticeable accuracy improvements over all counterparts. These experiments clearly validate the effectiveness of the proposed method when training state-of-the-art CNNs.

\begin{table}
\begin{center}
\resizebox{.9\linewidth}{!}{
\begin{tabular}{p{1.8cm}p{1.5cm}p{1.5cm}p{1.8cm}c}
\hline
Model&Method&Error(\%)&Average gain(\%)\\
\hline
 \multirow{3}{*}{\shortstack[l]{ResNet\\(d=32)}}            & baseline & 29.97(0.33) & -    \\
                                                            & DS   & $29.89(0.26)$ & 0.08 \\
                                                            & DKS     & \textbf{26.81(0.36)} & \textbf{3.16} \\ \hline
 \multirow{3}{*}{\shortstack[l]{ResNet\\(d=110)}}           & baseline & 27.66(0.60) & -    \\
                                                            & DS       & 26.95(0.51) & 0.71 \\
                                                            & DKS     & \textbf{24.98(0.35)} & \textbf{2.68} \\ \hline
 \multirow{3}{*}{\shortstack[l]{DenseNet\\(d=40,\,k=12)}}    & baseline & 24.91(0.18) & -    \\
                                                            & DS       & $24.46(0.22)$ & 0.45\\
                                                            & DKS     & \textbf{23.61(0.20)} & \textbf{1.30} \\ \hline
 \multirow{3}{*}{\shortstack[l]{DenseNet\\(d=100,\,k=12)}}  & baseline & 20.92(0.31) & -    \\
                                                            & DS       & 20.34(0.23) & 0.58 \\
                                                            & DKS     & \textbf{19.67(0.29)} & \textbf{1.25} \\ \hline
 \multirow{3}{*}{\shortstack[l]{WRN-28-4}}        & baseline & 21.39(0.30) & -    \\
                                                            & DS       & 20.47(0.21) & 0.92 \\
                                                            & DKS     & \textbf{18.91(0.08)} & \textbf{2.48} \\ \hline
 \multirow{3}{*}{\shortstack[l]{WRN-28-10}}         & baseline & 18.72(0.24) & -    \\
                                                            & DS       & 18.32(0.13) & 0.40 \\
                                                            & DKS     & \textbf{17.24(0.22)} & \textbf{1.48} \\ \hline
 \multirow{3}{*}{\shortstack[l]{WRN-28-10\\(0.3 dropout)}}         & baseline & 18.64(0.19) & -    \\
                                                            & DS       & 17.80(0.29) & 0.84 \\
                                                            & DKS     & \textbf{16.71(0.17)} & \textbf{1.93} \\ \hline
 \multirow{3}{*}{\shortstack[l]{MobileNet}}          & baseline & 23.60(0.22) & -    \\
                                                            & DS       & 22.98(0.17) & 0.62 \\
                                                            & DKS     & \textbf{21.26(0.16)} & \textbf{2.34} \\
\hline
\end{tabular}
}
\end{center}
\vskip -0.1 in
\caption{Accuracy comparison on the CIFAR-100 dataset. For each network, we run each method 5 times and report `mean(std)' error rates. Our method achieves state-of-the-art accuracy when training each backbone network.}
\label{1}
\vskip -0.1 in
\end{table}

\subsection{Experiments on ImageNet}

ImageNet classification dataset~\cite{ref21} is much larger than CIFAR-100 dataset. It has about 1.2 million training images and 50 thousand validation images, consisting of 1000 object classes. For training, images are resized to $256\times256$ first, and then $224\times224$ crops are randomly sampled from the resized images or their horizontal flips normalized with the per-channel mean and std values. For evaluation, we report Top-1 and Top-5 error rates using center crops of the resized validation data.

\textbf{Backbone Networks and Implementation Details.} We use popular ResNets as the backbone networks for evaluation. Specifically, ResNet-18, ResNet-50 and ResNet-152 are considered. All models are trained with SGD for 100 epochs. We set the batch size as 256, the weight decay as 0.0001 and the momentum as 0.9. The learning rate starts at 0.1, and is divided by 10 every 30 epochs. To show the compatibility of DKS with data augmentation methods, we train ResNet-18 and ResNet-50 with a simple data augmentation method, and train ResNet-152 with a more aggressive data augmentation method as in~\cite{ref12}. For each network, we add two auxiliary classifiers after the block Conv3\_x and Conv4\_x. The auxiliary classifiers are constructed with the same building block as in the backbone network. The differences are the number of residual blocks and the number of convolutional filters (see supplementary materials for details). All models are trained on a sever using 8 GPUs.

\textbf{Results Comparison.} Table~\ref{2} shows the results. Similar to the results on the CIFAR-100 dataset, on the ImageNet classification dataset, DS also shows minor accuracy improvements over the baseline models, even using our designed auxiliary classifiers. Its gain in Top-1/Top-5 accuracy is $0.60\%/0.33\%$, $0.38\%/0.11\%$ and $0.46\%/0.25\%$ for ResNet-18, ResNet-50 and ResNet-152, respectively. These results are consistent with the results reported in~\cite{ref12}. Benefiting from the proposed synergy loss, DKS achieves the best results which outperform DS with a margin of $1.78\%/1.25\%$, $1.56\%/1.07\%$ and $1.01\%/0.41\%$ in Top-1/Top-5 accuracy, respectively. Even using simple data augmentation, the ResNet-18/ResNet-50 model trained by our method shows $1.75\%/0.48\%$ Top-1 accuracy gain against the models released at Facebook github\footnote{\url{https://github.com/facebook/fb.resnet.torch}}, which are trained with much stronger data augmentations. Furthermore, it can be seen that the accuracy improvement from our method decreases slightly as network depth increases. Curves of Top-1 training and test error rates can be found in supplemental materials.
\begin{table}
\begin{center}
\resizebox{.95\linewidth}{!}{
\begin{tabular}{p{1.5cm}p{1.5cm}p{3.2cm}p{1.5cm}c}
\hline
Model&Method&Top-1/Top-5 Error(\%)&Gain(\%)\\
\hline
 \multirow{3}{*}{ResNet-18}    & baseline & 31.06\,/\,11.13 & -         \\
                               & DS   & 30.46\,/\,10.80 & 0.60\,/\,0.33 \\
                               & DKS     & \textbf{28.68\,/\,9.55} & \textbf{2.38\,/\,1.58} \\ \hline
 \multirow{3}{*}{ResNet-50}    & baseline & 25.47\,/\,7.58 & -         \\
                               & DS       & 25.09\,/\,7.47 & 0.38\,/\,0.11 \\
                               & DKS     & \textbf{23.53\,/\,6.40} & \textbf{1.94\,/\,1.18} \\ \hline
 \multirow{3}{*}{ResNet-152}   & baseline & 22.45\,/\,5.94 & -         \\
                               & DS       & 21.99\,/\,5.69 & 0.46\,/\,0.25 \\
                               & DKS     & \textbf{20.98\,/\,5.28} & \textbf{1.47\,/\,0.66} \\
\hline
\end{tabular}
}
\end{center}
\vskip -0.1 in
\caption{Accuracy comparison on the ImageNet dataset. }
\label{2}
\vskip -0.2 in
\end{table}

\subsection{Ablation Study}

\textbf{Analysis of Auxiliary Classifiers.} Given a backbone network, the questions of how to design auxiliary classifiers and where to place them are critically important for the deeply-supervised learning methods~\cite{ref22,ref12} and our method. We perform experiments on the ImageNet classification dataset with ResNet-18 to study these two questions. To the first question, we compare our designed auxiliary classifiers and the relatively simple ones suggested in~\cite{ref22}. In the experiments, auxiliary classifiers are added on top of the block Conv3$\_x$ and Conv4$\_x$. With simple auxiliary classifiers, DS introduces $1.17\%/0.80\%$ drop in Top-1/Top-5 accuracy. Comparatively, with our designed auxiliary classifiers, DS brings $0.60\%/0.33\%$ increase and DKS achieves $2.38\%/1.58\%$ gain. The training and test curves are shown in Fig.~\ref{fig:03}. We also perform extensive experiments on the CIFAR-100 dataset using ResNet-32 to analyze the effect of auxiliary classifiers with different levels of complexity to DS and our method. Results are shown in Table~\ref{6}. With very simple auxiliary classifiers, DS shows accuracy drop and DKS further decreases model accuracy. Along with the increased complexity of auxiliary classifiers, DKS outperforms DS with improved margin. Please see supplementary materials for details. To the second question, we consider different settings by adding our designed auxiliary classifiers to at most three intermediate layer locations (including the block Conv2$\_x$, Conv3$\_x$ and Conv4$\_x$) of ResNet-18. Detailed results are shown in Table~\ref{3} where C1, C2, C3 and C4 denote the auxiliary classifiers connected on top of the last layer, the block Conv4\_x, Conv3\_x and Conv2\_x, sequentially. From Table~\ref{3}, we can make following observations:\begin{table}[htp!]
\begin{center}
\resizebox{.9\linewidth}{!}{
\begin{tabular}{p{2.2cm}p{1.4cm}p{1.4cm}p{1.9cm}c}
\hline
Aux.Classifiers & Error(\%)   (DS) & Error(\%)  (DKS) & Avg Gain(\%) (DKS to DS)\\
\hline
 AP+2FC & 31.85(0.42) & 35.09(0.54)& -3.24 \\
 AP+1Conv+2FC   & 30.24(0.05) & 32.52(0.27) & -2.28 \\
 Narrow Blocks & 29.52(0.30) & 29.18(0.28) & 0.34 \\
 Shallow Blocks & 29.39(0.09) & 28.69(0.28) & 0.70 \\
 Ours & 29.89(0.26) & 26.81(0.36) & 3.08 \\
\hline
\end{tabular}
}
\end{center}
\vskip -0.1 in
\caption{Accuracy comparison of DKS and KD using auxiliary classifiers with different levels of complexity. The baseline ResNet-32 model trained on CIFAR-100 shows 29.97\%(0.33\%) `mean(std)' error rates over 5 runs. In the table, AP denotes average pooling layer, Conv denotes convolutional layer and FC denotes fully connected layer.}
\label{6}
\end{table}
\begin{table}[htp!]
\begin{center}
\resizebox{.8\linewidth}{!}{
\begin{tabular}{p{1.6cm}p{3.2cm}p{1.3cm}c}
\hline
Model&Top-1/Top-5 Error(\%)&Gain(\%)\\
\hline
 baseline($C_1$)   & 31.06\,/\,11.13 & -         \\
 $C_1C_2$          & 29.64\,/\,10.09 & 1.42\,/\,1.04 \\
 $C_1C_3$          & 29.30\,/\,9.86 & 1.76\,/\,1.27 \\
 $C_1C_4$          & 29.36\,/\,9.91 & 1.70\,/\,1.22 \\
 $C_1C_2C_3$       & \textbf{28.68\,/\,9.55} & \textbf{2.38\,/\,1.58} \\
 $C_1C_2C_3C_4$    & 29.00\,/\,9.79 & 2.06\,/\,1.34 \\
\hline
\end{tabular}
}
\end{center}
\vskip -0.1 in
\caption{Accuracy gains of DKS with auxiliary classifiers connected to different intermediate layers of ResNet-18.}
\label{3}
\end{table}
\begin{table}[htp!]
\begin{center}
\resizebox{.8\linewidth}{!}{
\begin{tabular}{p{1.5cm}p{3.2cm}p{1.5cm}c}
\hline
Model&Top-1/Top-5 Error(\%)&Gain(\%)\\
\hline
 $C_1$      & 31.06\,/\,11.13 & 2.38\,/\,1.58 \\
 $C_2$      & 30.69\,/\,11.05 & 3.23\,/\,2.16 \\
 $C_3$      & 31.89\,/\,11.51 & 2.39\,/\,1.68 \\
\hline
\end{tabular}
}
\end{center}
\vskip -0.1 in
\caption{Accuracy gains of DKS to each individual auxiliary classifier connected to the corresponding intermediate layer of ResNet-18.}
\label{4}
\vskip -0.2 in
\end{table}(1) With only one auxiliary classifier, an early location is better than a relatively deep location; (2) Adding two or all of three auxiliary classifiers obtains more large gain than adding only one; (3) Adding C4 connected to an earlier intermediate layer into the combination of C2 and C3 decreases its accuracy. According to these results, we choose to add C2 and C3 for all experiments on the ImageNet classification dataset. In addition, we also analyze whether DKS is beneficial to auxiliary supervision branches or not. To this end, we train each individual auxiliary supervision branch separately, and compare it with the corresponding one trained with DKS. According to the results shown in Table~\ref{4}, we can see that our method also brings obvious accuracy gain to each auxiliary supervision branch.

\textbf{Comparison of Knowledge Matching Strategies.} We also compare the performance of three pairwise knowledge matching strategies shown in Fig.~\ref{fig:02}. Experiments are conducted on the ImageNet classification dataset with ResNet-18 using our best auxiliary classifier setting just discussed. Compared with the baseline model, our method obtains $0.50\%/0.45\%$, $2.22\%/1.19\%$ and $2.38\%/1.58\%$ increase in Top-1/Top-5 accuracy by using the top-down, bottom-up and bi-directional pairwise knowledge matching strategies, respectively. As the bi-directional strategy shows the best results, we adopt it as the default choice for DKS. Another interesting observation is that they all achieve improved results compared with the baseline method, showing that the pairwise knowledge transfer among the supervised classifiers connected to the backbone network is really helpful in regularizing model training.

\textbf{DKS on Very Deep Network.} Next, we conduct a set of experiments to analyze the performance of DKS on very deep CNNs. In the experiments, we consider the training of a ResNet variant with 1202 layers~\cite{ref15} on the CIFAR-100 dataset. Unlike auxiliary classifiers used in the other experiments, we study DKS with shallow but wide auxiliary classifiers in this experiment (see supplementary materials for details). Remarkably, although the network depth is significantly increased, the average accuracy of the models trained with our method is $69.54\%$, showing a $3.76\%/2.04\%$ margin compared with the baseline/DS method.

\textbf{DKS with Strong Regularization.} In order to explore the compatibility of DKS and more strong regularization methods, we conduct the experiments on the CIFAR-100 dataset following~\cite{ref16}. We add a dropout layer with a ratio of 0.3 after the first layer of every building block of WRN-28-10. The results are shown in Table~\ref{1}. It can be seen that the models trained with DKS show a mean accuracy of $16.71\%$, bringing $0.53\%$ gain to the case without dropout.

\textbf{DKS vs. Knowledge Distillation.} Further, we compare the performance of DKS, Knowledge Distillation (KD) and its variants. Experiments are conducted on the ImageNet classification dataset using ResNet-18. We use a pre-trained ResNet-50 model as the teacher and consider three different KD settings: (1) KD on C1 (the standard KD as in~\cite{ref37}); (2) KD on C1+DS; (3) KD on C2C3+DS. We evaluate temperature values of [1, 2, 5, 10, 20] and choose the best choice for each KD setting. From the results shown in Table~\ref{5}, we can make following observations: (1) KD can improve model training in all cases; (2) Distilling learnt knowledge into auxiliary classifiers connected to the earlier layers has small gain to DS, and more large gain can be achieved by applying KD on auxiliary classifiers added to the deep layers; (3) DKS achieves the best performance, showing the effectiveness of the proposed synergy loss.

\textbf{DKS on Noisy Data.} Finally, we explore the capability of our method to handle noisy data. Following~\cite{ref13}, we use CIFAR-10 dataset and DenseNet (d=40, k=12) as a test case. Before training, we randomly sample a fixed ratio of training data and replace their ground truth labels with randomly generated wrong labels. Results show that the average accuracy of the baseline model decreases from $94.62\%$ to $82.07\%$, while DS further decreases it to $80.47\%$ and ours is $83.73\%$, when $50\%$ training data are corrupted. As the ratio of the corrupted training data goes to $80\%$, our model still has $67.19\%$ mean\begin{table}[htp!]
\begin{center}
\resizebox{.85\linewidth}{!}{
\begin{tabular}{p{2.6cm}p{3.1cm}p{1.3cm}c}
\hline
Model&Top-1/Top-5 Error(\%)&Gain(\%)\\
\hline
 baseline & 31.06\,/\,11.13 & -         \\
 DS   & 30.46\,/\,10.80 & 0.60\,/\,0.33 \\
 KD on C1~\cite{ref37} & 29.71\,/\,10.33 & 1.35\,/\,0.80 \\
 KD on C1+DS & 29.38\,/\,10.10 & 1.68\,/\,1.03 \\
 KD on C2C3+DS & 30.32\,/\,10.64 & 0.74\,/\,0.49 \\
 DKS     & \textbf{28.68\,/\,9.55} & \textbf{2.38\,/\,1.58} \\
\hline
\end{tabular}
}
\end{center}
\vskip -0.2 in
\caption{Accuracy comparison of DKS, KD and its variants on the ImageNet classification dataset using ResNet-18.}
\label{5}
\vskip -0.2 in
\end{table} accuracy, outperforming the baseline/DS with a margin of $2.51\%/2.27\%$. These experiments partially show that our method has good capability to suppress noise disturbance and behaves like a strong regularizer.

\subsection{Discussion}
Although the CNNs used in our experiments have sophisticated building block designs which increase the flexibility of feature connection path and show stable convergence, our DKS can impressively improve their training in comparison to the standard training scheme and DS. This is first benefitted from adding proper auxiliary classifiers to the intermediate layers of the network, but we believe it is more benefitted from the proposed synergy loss which enables comprehensive pairwise knowledge matching among all supervised classifiers connected to the network, enhancing learnt feature representation. On the other hand, we observe substantial time increase for model training. For instance, a baseline ResNet-18 model is trained for about 20 hours on a server with 8 GPUs (an SSD is used to accelerate data accessing process), while our method needs about 37 hours, nearly doubling the training time. Besides, the training time for DS is almost the same as our method. We believe this mainly correlates with the number of auxiliary classifiers and their complexity. Therefore, there is a tradeoff between the required training time and the expected accuracy improvement. Achieving larger accuracy gain needs auxiliary classifiers to be more complex, while simple ones usually worsen model accuracy. Since increasing the number of auxiliary classifiers does not always bring more high accuracy gain, as shown in our ablation study, we think the current increase in training time is reasonable. More importantly, all auxiliary classifiers are discarded at inference phase, thus there is no extra computational cost.

\section{Conclusion}
In this paper, we revisit the deeply-supervised learning research and propose a new optimization scheme called DKS for training deep CNNs. It introduces a novel synergy loss which regularizes the training by considering dense pairwise knowledge matching among all supervised classifiers connected to the network. Extensive experiments on two well-known image classification tasks validate the effectiveness of our method.

{\small
\bibliographystyle{ieee}
\bibliography{DKS_3487_camera_ready}

\begin{thebibliography}{10}\itemsep=-1pt

\bibitem{ref20}
Y.~Chen, J.~Li, H.~Xiao, X.~Jin, S.~Yan, and J.~Feng.
\newblock Dual path networks.
\newblock In {\em NIPS}, 2017.

\bibitem{ref46}
T.~Dozat.
\newblock Incorporating nesterov momentum into adam.
\newblock In {\em ICLR-W}, 2016.

\bibitem{ref44}
J.~Duchi, E.~Hazan, and Y.~Singer.
\newblock Adaptive subgradient methods for online learning and stochastic
  optimization.
\newblock {\em Journal of Machine Learning Research}, 12(7):2121--2159, 2011.

\bibitem{ref10}
Y.~Gal and Z.~Ghahramani.
\newblock Dropout as a bayesian approximation: Representing model uncertainty
  in deep learning.
\newblock In {\em ICML}, 2016.

\bibitem{ref37-5}
N.~C. Garcia, P.~Morerio, and V.~Murino.
\newblock Modality distillation with multiple stream networks for action
  recognition.
\newblock In {\em ECCV}, 2018.

\bibitem{ref06-3}
G.~Ghiasi, T.-Y. Lin, and Q.~V~Le.
\newblock Dropblock: A regularization method for convolutional networks.
\newblock In {\em NIPS}, 2018.

\bibitem{ref03}
R.~Girshick, J.~Donahue, T.~Darrell, and J.~Malik.
\newblock Rich feature hierarchies for accurate object detection and semantic
  segmentation.
\newblock In {\em CVPR}, 2014.

\bibitem{ref06-2}
I.~J. Goodfellow, D.~Warde-Farley, M.~Mirza, A.~Courville, and Y.~Bengio.
\newblock Maxout networks.
\newblock In {\em ICML}, 2013.

\bibitem{ref08}
K.~He, X.~Zhang, S.~Ren, and J.~Sun.
\newblock Delving deep into rectifiers: Surpassing human-level performance on
  imagenet classification.
\newblock In {\em ICCV}, 2015.

\bibitem{ref15}
K.~He, X.~Zhang, S.~Ren, and J.~Sun.
\newblock Deep residual learning for image recognition.
\newblock In {\em CVPR}, 2016.

\bibitem{ref37}
G.~Hinton, O.~Vinyals, and J.~Dean.
\newblock Distilling the knowledge in a neural network.
\newblock {\em arXiv preprint arXiv:1503.02531}, 2015.

\bibitem{ref42-2}
S.~Hou, X.~Pan, C.~C. Loy, Z.~Wang, and D.~Lin.
\newblock Lifelong learning via progressive distillation and retrospection.
\newblock In {\em ECCV}, 2018.

\bibitem{ref11}
A.~G. Howard.
\newblock Some improvements on deep convolutional neural network based image
  classification.
\newblock {\em arXiv preprint arXiv:1312.5402}, 2013.

\bibitem{ref27}
A.~G. Howard, M.~Zhu, B.~Chen, D.~Kalenichenko, W.~Wang, T.~Weyand,
  M.~Andreetto, and H.~Adam.
\newblock Mobilenets: Efficient convolutional neural networks for mobile vision
  applications.
\newblock {\em arXiv preprint arXiv:1704.04861}, 2017.

\bibitem{ref19}
J.~Hu, L.~Shen, and G.~Sun.
\newblock Squeeze-and-excitation networks.
\newblock In {\em CVPR}, 2018.

\bibitem{ref23}
G.~Huang, D.~Chen, T.~Li, F.~Wu, L.~van~der Maaten, and K.~Q. Weinberger.
\newblock Multi-scale dense networks for resource efficient image
  classification.
\newblock In {\em ICLR}, 2018.

\bibitem{ref17}
G.~Huang, Z.~Liu, L.~van~der Maaten, and K.~Q. Weinberger.
\newblock Densely connected convolutional networks.
\newblock In {\em CVPR}, 2017.

\bibitem{ref07}
S.~Ioffe and C.~Szegedy.
\newblock Batch normalization: Accelerating deep network training by reducing
  internal covariate shift.
\newblock In {\em ICML}, 2015.

\bibitem{ref45}
D.~P. Kingma and J.~Ba.
\newblock Adam: A method for stochastic optimization.
\newblock In {\em ICLR}, 2015.

\bibitem{ref50}
A.~Krizhevsky and G.~Hinton.
\newblock Learning multiple layers of features from tiny images.
\newblock {\em Handbook of Systemic Autoimmune Diseases}, 2009.

\bibitem{ref01}
A.~Krizhevsky, I.~Sutskever, and G.~E. Hinton.
\newblock Imagenet classification with deep convolutional neural networks.
\newblock In {\em NIPS}, 2012.

\bibitem{ref22}
C.-Y. Lee, S.~Xie, P.~Gallagher, Z.~Zhang, and Z.~Tu.
\newblock Deeply-supervised nets.
\newblock In {\em AISTATS}, 2015.

\bibitem{ref35}
C.~Li, M.~Z. Zia, Q.-H. Tran, X.~Yu, G.~D. Hager, and M.~Chandraker.
\newblock Deep supervision with intermediate concepts.
\newblock {\em arXiv preprint arXiv:1801.03399}, 2018.

\bibitem{ref42}
Z.~Li and D.~Hoiem.
\newblock Learning without forgetting.
\newblock In {\em ECCV}, 2016.

\bibitem{ref31-2}
C.~Liu, B.~Zoph, M.~Neumann, J.~Shlens, W.~Hua, L.-J. Li, L.~Fei-Fei,
  A.~Yuille, J.~Huang, and K.~Murphy.
\newblock Progressive neural architecture search.
\newblock In {\em ECCV}, 2018.

\bibitem{ref02}
J.~Long, E.~Shelhamer, and T.~Darrell.
\newblock Fully convolutional networks for semantic segmentation.
\newblock In {\em CVPR}, 2015.

\bibitem{ref28-2}
N.~Ma, X.~Zhang, H.-T. Zheng, and J.~Sun.
\newblock Shufflenet v2: Practical guidelines for efficient cnn architecture
  design.
\newblock In {\em ECCV}, 2018.

\bibitem{ref40}
A.~Mishra and D.~Marr.
\newblock Apprentice: Using knowledge distillation techniques to improve
  low-precision network accuracy.
\newblock In {\em ICLR}, 2018.

\bibitem{ref48}
A.~Mosinska, P.~M{\'{a}}rquez-Neila, M.~Kozinski, and P.~Fua.
\newblock Beyond the pixel-wise loss for topology-aware delineation.
\newblock In {\em CVPR}, 2018.

\bibitem{ref05}
V.~Nair and G.~E. Hinton.
\newblock Rectified linear units improve restricted boltzmann machines.
\newblock In {\em ICML}, 2010.

\bibitem{ref33}
A.~Newell, K.~Yang, and J.~Deng.
\newblock Stacked hourglass networks for human pose estimation.
\newblock In {\em ECCV}, 2016.

\bibitem{ref41-2}
X.~Nie, J.~Feng, and S.~Yan.
\newblock Mutual learning to adapt for joint human parsing and pose estimation.
\newblock In {\em ECCV}, 2018.

\bibitem{ref49}
A.~Paszke, S.~Gross, S.~Chintala, G.~Chanan, E.~Yang, Z.~DeVito, Z.~Lin,
  A.~Desmaison, L.~Antiga, and A.~Lerer.
\newblock Automatic differentiation in pytorch.
\newblock In {\em NIPS-W}, 2017.

\bibitem{ref31}
H.~Pham, M.~Y. Guan, B.~Zoph, Q.~V. Le, and J.~Dean.
\newblock Efficient neural architecture search via parameter sharing.
\newblock In {\em ICML}, 2018.

\bibitem{ref37-3}
S.~Qiao, W.~Shen, Z.~Zhang, B.~Wang, and A.~Yuille.
\newblock Deep co-training for semi-supervised image recognition.
\newblock In {\em ECCV}, 2018.

\bibitem{ref38}
A.~Romero, N.~Ballas, S.~E. Kahou, A.~Chassang, C.~Gatta, and Y.~Bengio.
\newblock Fitnets: Hints for thin deep nets.
\newblock In {\em ICLR}, 2015.

\bibitem{ref21}
O.~Russakovsky, J.~Deng, H.~Su, J.~Krause, S.~Satheesh, S.~Ma, Z.~Huang,
  A.~Karpathy, A.~Khosla, M.~Bernstein, A.~C. Berg, and F.-F. Li.
\newblock Imagenet large scale visual recognition challenge.
\newblock {\em International Journal of Computer Vision}, 115(3):211--252,
  2015.

\bibitem{ref29}
M.~Sandler, A.~Howard, M.~Zhu, A.~Zhmoginov, and L.-C. Chen.
\newblock Mobilenetv2: Inverted residuals and linear bottlenecks.
\newblock In {\em CVPR}, 2018.

\bibitem{ref14}
K.~Simonyan and A.~Zisserman.
\newblock Very deep convolutional networks for large-scale image recognition.
\newblock In {\em ICLR}, 2015.

\bibitem{ref52}
S.~Srinivas and F.~Fleuret.
\newblock Knowledge transfer with {J}acobian matching.
\newblock In {\em ICML}, 2018.

\bibitem{ref06}
N.~Srivastava, G.~Hinton, A.~Krizhevsky, I.~Sutskever, and R.~Salakhutdinov.
\newblock Dropout: A simple way to prevent neural networks from overfitting.
\newblock {\em Journal of Machine Learning Research}, 15(1):1929--1958, 2014.

\bibitem{ref12}
C.~Szegedy, W.~Liu, Y.~Jia, P.~Sermanet, S.~Reed, D.~Anguelov, D.~Erhan,
  V.~Vanhoucke, A.~Rabinovich, et~al.
\newblock Going deeper with convolutions.
\newblock In {\em CVPR}, 2015.

\bibitem{ref04}
Y.~Taigman, M.~Yang, M.~Ranzato, and L.~Wolf.
\newblock Deepface: Closing the gap to human-level performance in face
  verification.
\newblock In {\em CVPR}, 2014.

\bibitem{ref09}
L.~Wan, M.~Zeiler, S.~Zhang, Y.~Le~Cun, and R.~Fergus.
\newblock Regularization of neural networks using dropconnect.
\newblock In {\em ICML}, 2013.

\bibitem{ref18}
S.~Xie, R.~Girshick, P.~Doll{\'a}r, Z.~Tu, and K.~He.
\newblock Aggregated residual transformations for deep neural networks.
\newblock In {\em CVPR}, 2017.

\bibitem{ref32}
S.~Xie and Z.~Tu.
\newblock Holistically-nested edge detection.
\newblock In {\em ICCV}, 2015.

\bibitem{ref39}
J.~Yim, D.~Joo, J.~Bae, and J.~Kim.
\newblock A gift from knowledge distillation: Fast optimization, network
  minimization and transfer learning.
\newblock In {\em CVPR}, 2017.

\bibitem{ref16}
S.~Zagoruyko and N.~Komodakis.
\newblock Wide residual networks.
\newblock In {\em BMVC}, 2016.

\bibitem{ref37-2}
S.~Zagoruyko and N.~Komodakis.
\newblock Paying more attention to attention: Improving the performance of
  convolutional neural networks via attention transfer.
\newblock In {\em ICLR}, 2017.

\bibitem{ref13}
H.~Zhang, M.~Cisse, Y.~N. Dauphin, and D.~Lopez-Paz.
\newblock mixup: Beyond empirical risk minimization.
\newblock In {\em ICLR}, 2018.

\bibitem{ref36}
H.~Zhang, H.~Wu, W.~Sun, and B.~Zheng.
\newblock Deeptravel: a neural network based travel time estimation model with
  auxiliary supervision.
\newblock {\em arXiv preprint arXiv:1802.02147}, 2018.

\bibitem{ref28}
X.~Zhang, X.~Zhou, M.~Lin, and J.~Sun.
\newblock Shufflenet: An extremely efficient convolutional neural network for
  mobile devices.
\newblock In {\em CVPR}, 2018.

\bibitem{ref41}
Y.~Zhang, T.~Xiang, T.~M. Hospedales, and H.~Lu.
\newblock Deep mutual learning.
\newblock In {\em CVPR}, 2018.

\bibitem{ref34-2}
Z.~Zhang, X.~Zhang, C.~Peng, D.~Cheng, and J.~Sun.
\newblock Exfuse: Enhancing feature fusion for semantic segmentation.
\newblock In {\em ECCV}, 2018.

\bibitem{ref34}
H.~Zhao, J.~Shi, X.~Qi, X.~Wang, and J.~Jia.
\newblock Pyramid scene parsing network.
\newblock In {\em CVPR}, 2017.

\bibitem{ref41-3}
X.~Zhuang, H.~Luo, X.~Fan, W.~Xiang, Y.~Sun, Q.~Xiao, W.~Jiang, C.~Zhang, and
  J.~Sun.
\newblock Alignedreid: Surpassing human-level performance in person
  re-identification.
\newblock In {\em CVPR}, 2018.

\bibitem{ref30}
B.~Zoph and Q.~V. Le.
\newblock Neural architecture search with reinforcement learning.
\newblock In {\em ICLR}, 2017.

\end{thebibliography}
}

\clearpage

\appendix
  \renewcommand{\appendixname}{Appendix~\Alph{section}}
  \section{Appendix 1: Theoretical Insights}
In this section, we give some theoretical insights about why the synergy mechanism in our DKS method shows better performance than the Deeply-Supervised (DS) learning method~\cite{ref22,ref12}. For simplicity, we focus on the optimization of a CNN regression model with one auxiliary branch. Inspired by~\cite{ref52}, we give a formal proof for that the pair-wise synergy term behaves as a regularizer which penalizes the inconsistency between the gradients of the two branches w.r.t. their shared intermediate feature map. Such a proof can be generalized to the optimization of a deep CNN classification model.

Suppose we need to train a CNN model to fit target data distribution $x,y\sim D$ where $x$ denotes the input and $y$ denotes the expected output. The model has two output heads, the top-most one $\hat{y}_1$ and the auxiliary one $\hat{y}_2$ as illustrated in Fig.~\ref{fig:regression_model}. The forward process is as follows, $z = f(x),~\hat{y}_1 = g_1(z),~\hat{y}_2 = g_2(z)$, where $z$ denotes the intermediate feature map. In DS configuration, the loss function used to guide the training process can be written as
\begin{align*}
  &L_{DS} = \mathop{\mathbb{E}}\limits_{x,y \sim D,~\epsilon}\left(\frac{1}{2}\left|\left|g_1(z+\epsilon)-y\right|\right|^2 \right. \\
  &~~~ \left. + \frac{1}{2}\left|\left|g_2(z+\epsilon)-y\right|\right|^2\right),
\end{align*} where $\epsilon$ denotes the random perturbations on the intermediate feature map caused by data augmentation. Here, we assume $\mathop{\mathbb{E}}(\epsilon) = 0,~\mathop{\mathbb{E}}(\epsilon^2) = \sigma^2$. In DKS configuration, the loss function can be written as
\begin{align*}
&L_{DKS} = \mathop{\mathbb{E}}\limits_{x,y \sim D,~\epsilon}\left( \frac{1}{2}\left|\left|g_1(z+\epsilon)-y\right|\right|^2 \right. \\
&~ \left. + \frac{1}{2}\left|\left|g_2(z+\epsilon)-y\right|\right|^2 + \frac{1}{2}\left|\left|g_1(z+\epsilon)-g_2(z+\epsilon)\right|\right|^2 \right).
\end{align*}
\begin{proposition}
The synergy term in DKS guarantees more consistent gradients of the two branches w.r.t. $z$ compared with DS. That is, the synergy term penalizes $\left|\left|\frac{\partial \hat{y}_1}{\partial z} - \frac{\partial \hat{y}_2}{\partial z}\right|\right|^2$.
\end{proposition}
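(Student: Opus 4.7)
The plan is to isolate the extra contribution of the synergy term relative to DS and show that, in expectation over the perturbation $\epsilon$, this contribution is precisely a penalty on the discrepancy of the two Jacobians $\partial g_1/\partial z$ and $\partial g_2/\partial z$. Since $L_{DKS}-L_{DS}=\mathbb{E}_{x,y\sim D,\epsilon}\bigl[\tfrac12\|g_1(z+\epsilon)-g_2(z+\epsilon)\|^2\bigr]$, the whole argument reduces to analyzing this single expectation, so the common quadratic-in-$y$ parts of DS cancel out and do not enter the proof.

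First I would fix $x$ (and hence $z=f(x)$) and perform a first-order Taylor expansion of each head around $z$:
\begin{equation*}
g_i(z+\epsilon)=g_i(z)+\tfrac{\partial g_i}{\partial z}\,\epsilon+r_i(\epsilon),\qquad i=1,2,
\end{equation*}
where $r_i(\epsilon)=O(\|\epsilon\|^2)$ collects the higher-order remainder. Subtracting gives
\begin{equation*}
g_1(z+\epsilon)-g_2(z+\epsilon)=\bigl(g_1(z)-g_2(z)\bigr)+\Bigl(\tfrac{\partial g_1}{\partial z}-\tfrac{\partial g_2}{\partial z}\Bigr)\epsilon+\bigl(r_1(\epsilon)-r_2(\epsilon)\bigr).
\end{equation*}
Squaring, I obtain three groups of terms: a deterministic part $\|g_1(z)-g_2(z)\|^2$, cross terms linear in $\epsilon$, and a quadratic form $\epsilon^\top J^\top J\,\epsilon$ where $J=\partial g_1/\partial z-\partial g_2/\partial z$, plus $O(\|\epsilon\|^3)$ residuals from the remainders.

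Next I would take expectation over $\epsilon$ using the stated moments $\mathbb{E}(\epsilon)=0$ and $\mathbb{E}(\epsilon\epsilon^\top)=\sigma^2 I$ (the scalar formula $\mathbb{E}(\epsilon^2)=\sigma^2$ in the excerpt is naturally read as isotropic second moments). The linear cross terms vanish by $\mathbb{E}(\epsilon)=0$, and the quadratic form gives $\mathbb{E}[\epsilon^\top J^\top J\epsilon]=\sigma^2\|J\|_F^2$. Thus, up to higher-order terms in $\sigma$,
\begin{equation*}
\mathbb{E}_\epsilon\!\left[\tfrac12\|g_1(z+\epsilon)-g_2(z+\epsilon)\|^2\right]=\tfrac12\|g_1(z)-g_2(z)\|^2+\tfrac{\sigma^2}{2}\Bigl\|\tfrac{\partial \hat{y}_1}{\partial z}-\tfrac{\partial \hat{y}_2}{\partial z}\Bigr\|^2+O(\sigma^3).
\end{equation*}
Finally, taking the outer expectation over $x,y\sim D$ and comparing with $L_{DS}$ shows that the synergy term adds exactly an (expected) output-matching cost plus a Jacobian-matching cost $\tfrac{\sigma^2}{2}\,\mathbb{E}_{x}\bigl\|\partial\hat{y}_1/\partial z-\partial\hat{y}_2/\partial z\bigr\|^2$, proving the claim.

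The main obstacle I expect is the rigorous handling of the higher-order remainder $r_1-r_2$: one must argue that under standard smoothness assumptions on $g_1,g_2$ and under the small-perturbation regime implicit in the paper (mild augmentation noise, so $\sigma$ is small), the contribution of $r_i(\epsilon)$ to the squared-norm expectation is $O(\sigma^3)$ and thus negligible relative to the $O(\sigma^2)$ Jacobian penalty. A clean way to state this is to bound the Hessians of $g_1$ and $g_2$ uniformly on a neighborhood of $z$; the rest of the calculation is routine moment-matching and does not require any new ideas beyond the Taylor expansion.
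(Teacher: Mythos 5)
Your proposal is correct and follows essentially the same route as the paper's own proof: a first-order Taylor expansion of $g_1(z+\epsilon)-g_2(z+\epsilon)$ around $z$, cancellation of the linear cross term via $\mathbb{E}(\epsilon)=0$, and identification of the $\sigma^2$-weighted Jacobian-mismatch penalty from the second moment. Your treatment is in fact slightly more careful than the paper's (explicit quadratic form $\epsilon^\top J^\top J\,\epsilon$ with isotropic covariance and a discussion of the remainder control), but it introduces no genuinely different idea.
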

\begin{proof}
The synergy term in DKS:
\begin{align*}
&\mathop{\mathbb{E}}\limits_{x,y \sim D,~\epsilon}\left(\frac{1}{2}\left|\left|g_1(z+\epsilon)-g_2(z+\epsilon)\right|\right|^2\right)\\
&= \mathop{\mathbb{E}}\limits_{x,y \sim D,~\epsilon}\left(\frac{1}{2}\left|\left|g_1(z) + \epsilon \frac{\partial g_1(z)}{\partial z} + O(\epsilon^2)\right.\right.\right.\\ &~~~- \left.\left.\left.g_2(z) - \epsilon \frac{\partial g_2(z)}{\partial z} + O(\epsilon^2)\right|\right|^2\right)\\
&=\frac{1}{2}\mathop{\mathbb{E}}\limits_{x,y \sim D}\left(\left|\left|g_1(z) - g_2(z)\right|\right|^2\right) \\ &~~~+ \mathop{\mathbb{E}}(\epsilon)~\mathop{\mathbb{E}}\limits_{x,y \sim D} \left(\left|\left|g_1(z) - g_2(z)\right|\right| \cdot \left|\left|\frac{\partial g_1(z)}{\partial z} - \frac{\partial g_2(z)}{\partial z} \right|\right|\right) \\ &~~+ \frac{1}{2} \mathop{\mathbb{E}}(\epsilon^2)~\mathop{\mathbb{E}}\limits_{x,y \sim D}\left(\left|\left|\frac{\partial g_1(z)}{\partial z} - \frac{\partial g_2(z)}{\partial z} \right|\right|^2\right) + O(\epsilon^4)
\\ &= \frac{1}{2}\mathop{\mathbb{E}}\limits_{x,y \sim D}\left(\left|\left|g_1(z) - g_2(z)\right|\right|^2\right) \\ &~~+ \frac{1}{2}\sigma^2 \mathop{\mathbb{E}}\limits_{x,y \sim D}\left(\left|\left|\frac{\partial g_1(z)}{\partial z} - \frac{\partial g_2(z)}{\partial z} \right|\right|^2\right) + O(\epsilon^4)
\end{align*}
\end{proof}
\begin{figure}
\centering
\includegraphics [width=0.4\textwidth]{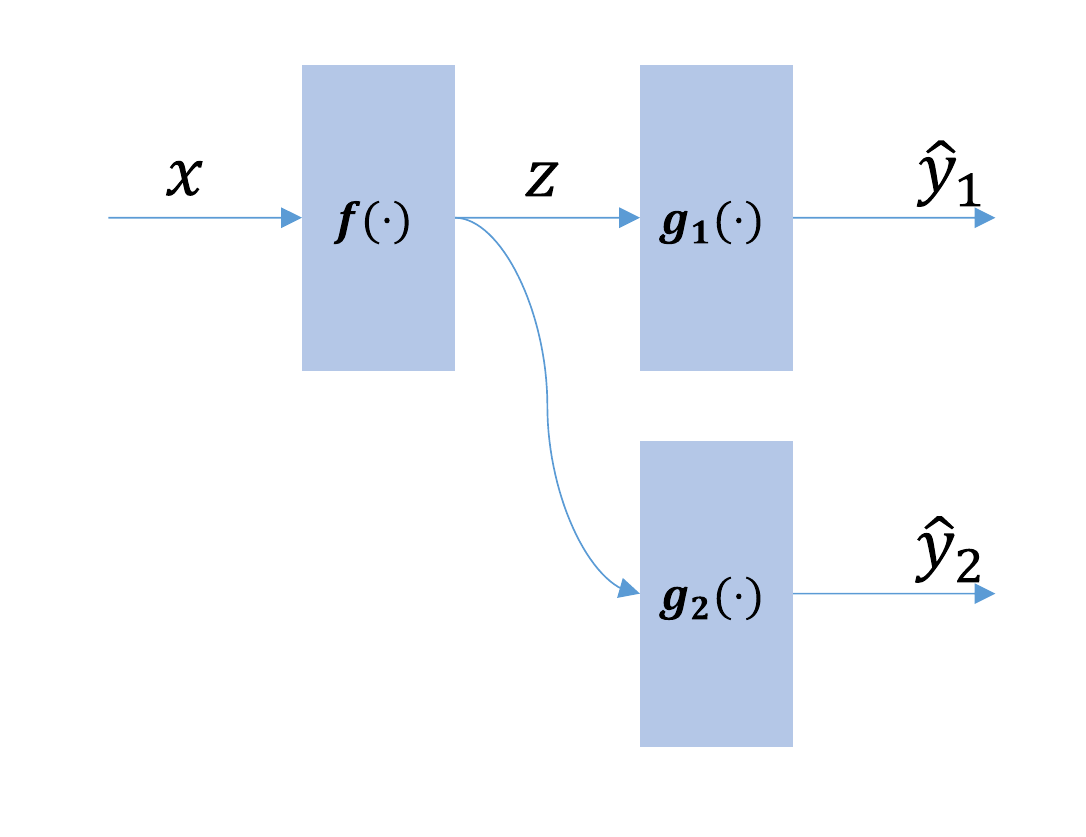}
\caption{The regression model used in the proof.}
\label{fig:regression_model}
\end{figure}
\section{Appendix 2: Design Details of Auxiliary Classifiers}
As we described in the paper, we append carefully designed auxiliary classifiers on top of some intermediate layers of a given backbone network when applying our DKS method to CIFAR-100 and ImageNet classification datasets. In this section, we provide the design details of the auxiliary classifiers.
\subsection{Auxiliary Classifiers for CIFAR-100}
On the CIFAR-100 dataset, we test several kinds of backbone networks including ResNets~\cite{ref15}, DenseNets~\cite{ref17}, WRNs~\cite{ref16} and MobileNet~\cite{ref27}. In this sub-section, we describe the auxiliary classifiers used in the Section 4.1, the Section 4.3 `Analysis of Auxiliary Classifiers', `DKS on Very Deep Network', `DKS with Strong Regularization' and `DKS on Noisy Data' of our paper, respectively.
\subsubsection{Auxiliary Classifiers Used in Section 4.1}
\paragraph{Locations.}
In the experiments, we add two auxiliary classifiers to every backbone network. Their locations for different backbone networks are shown in Fig.\ref{fig:locations_cifar}.
\paragraph{Structures.}
In the experiments, we append relatively complex auxiliary supervision branches on top of certain intermediate layers during network training. Specifically, every auxiliary supervision branch is composed of the same building block (e.g., residual block in ResNet) as in the backbone network. The differences lie in the numbers and parameter sizes of convolutional layers. As empirically verified in~\cite{ref23}, early layers lack coarse-level features which are helpful for image-level classification. In order to address this problem, we use a heuristic principle making the paths from the input to all classifiers have the same number of down-sampling layers. We detail the hyper-parameter settings of the convolutional layers of different backbone networks in Table ~\ref{tab:CIFAR_structures_resnet}, Table~\ref{tab:CIFAR_structures_wrn}, Table~\ref{tab:CIFAR_structures_densenet} and Table~\ref{tab:CIFAR_structures_mobilenet}, respectively.

\begin{figure}
\centering
\includegraphics [width=0.5\textwidth]{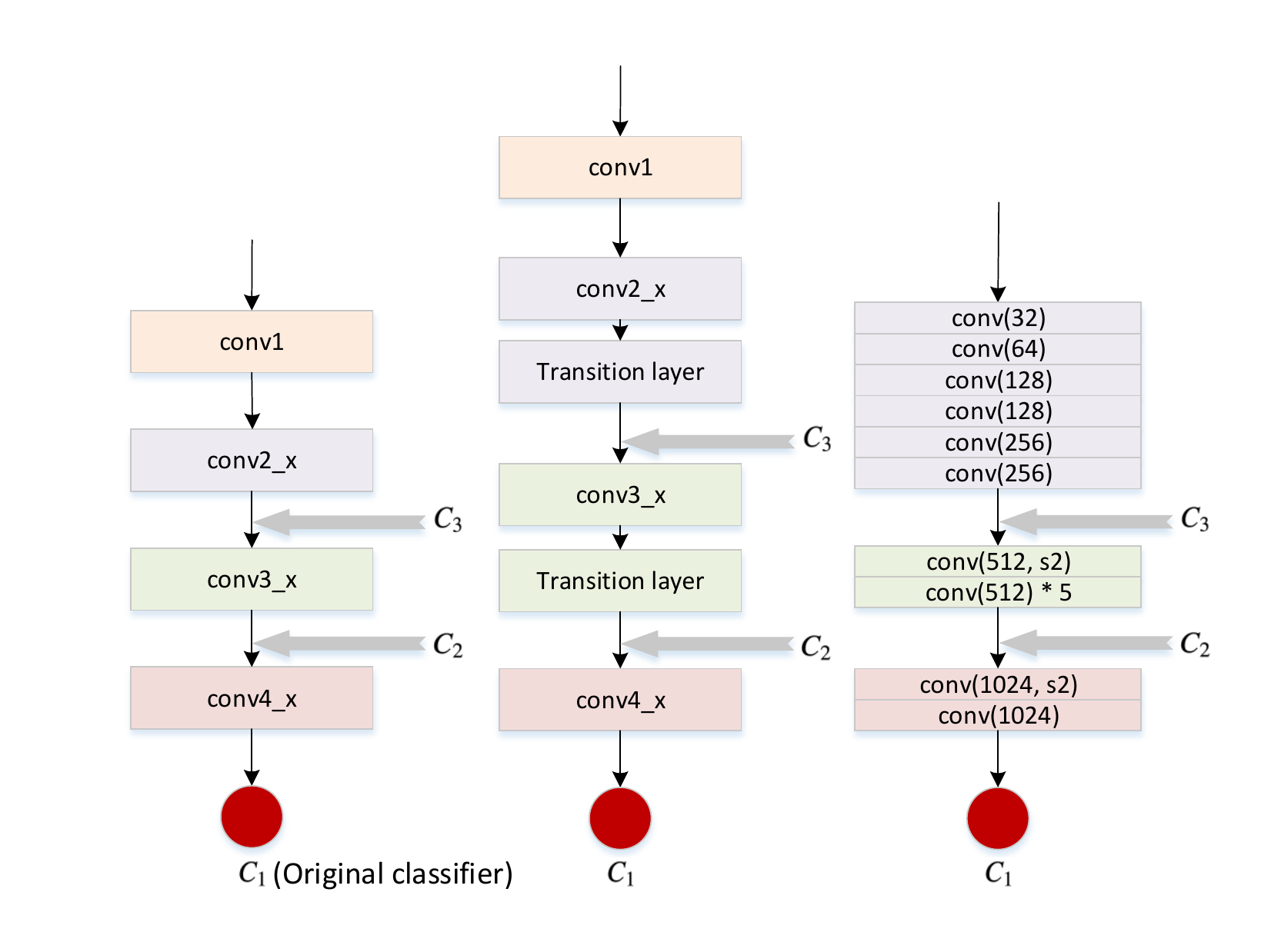}
\caption{Locations of the auxiliary classifiers added to the backbone networks evaluated on the CIFAR-100 dataset. The left figure is for ResNets and WRNs, and the middle figure is for DenseNets, and the right figure is for MobileNet. The grey thick arrows indicate the locations where auxiliary classifiers are added. We denote these three classifiers as $C_1$, $C_2$ and $C_3$ respectively, where $C_1$ is the original classifier of the backbone network.}
\label{fig:locations_cifar}
\end{figure}

\subsubsection{Auxiliary Classifiers Used in Section 4.3 `Analysis of Auxiliary Classifiers'}
In order to analyze the impact of the complexity of auxiliary classifiers, we evaluate different auxiliary classifier designs for ResNet-32 in the Section 4.3 `Analysis of Auxiliary Classifiers'.
\paragraph{Locations.}
The locations are the same as that described in Fig.~\ref{fig:locations_cifar}.
\paragraph{Structures.}
As described in the Table 3 of our paper, we evaluated four more types of auxiliary classifiers besides our final design. AP+2FC refers to one average pooling layer + two fully connected layers. AP+1Conv+2FC refers to one average pooling layer + one convolutional layer + two fully connected layers. Narrow Blocks means the auxiliary classifiers are narrower than the original design (i.e., the top-most classifier connected to the last layer of the backbone network). Shallow Blocks means the auxiliary classifiers are shallower than the original design. Please refer to Fig.~\ref{fig:ep_study_FC} and Table~\ref{tab:ep_study_FC} for more details.

\begin{figure}
\centering
\includegraphics [width=0.425\textwidth]{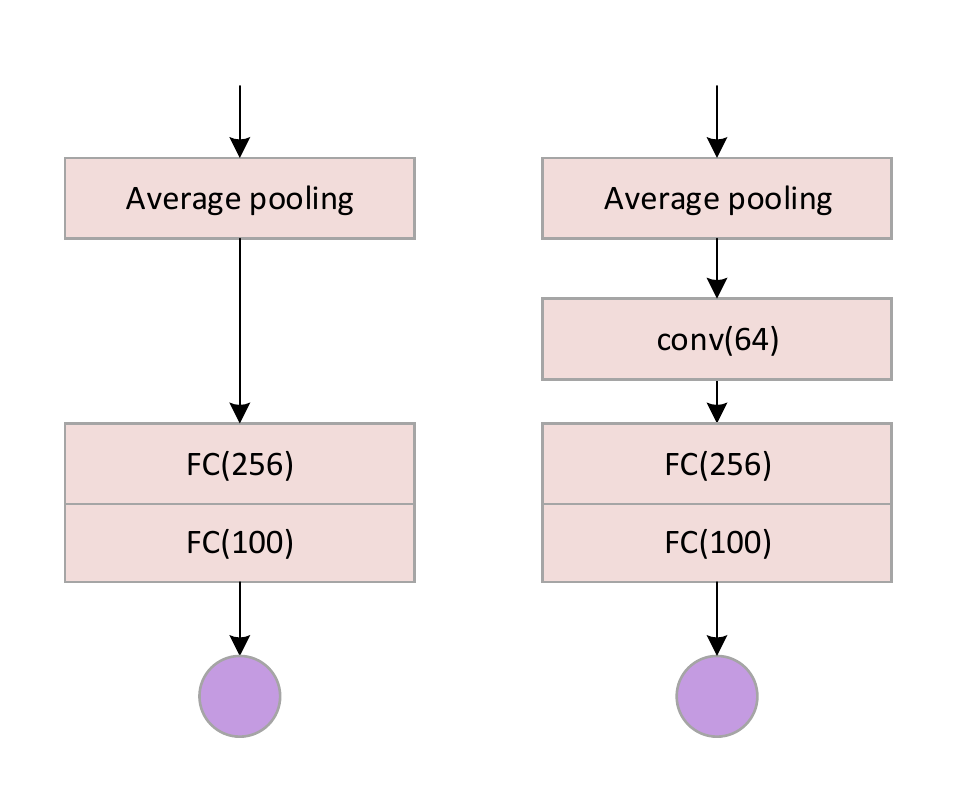}
\caption{Details of the AP+2FC and AP+1Conv+2FC auxiliary classifiers added to the ResNet-32 backbone network evaluated on the CIFAR-100 dataset. The output size of the average pooling layer is $4 \times 4$. The number of output channels of every layer is shown in the parentheses.}
\label{fig:ep_study_FC}
\end{figure}

\subsubsection{Auxiliary Classifiers Used in Section 4.3 `DKS on Very Deep Network'}
We conduct a set of experiments to analyze the performance of DKS on very deep CNNs. In the experiments, we consider the training of a ResNet variant with 1202 layers~\cite{ref15} on the CIFAR-100 dataset. Unlike auxiliary classifiers used in the other experiments, we study DKS with shallow but wide auxiliary classifiers in the experiments.
\paragraph{Locations.}
In the experiments, we add three auxiliary classifiers to the ResNet-1202 backbone network. Their locations are shown in Fig.\ref{fig:locations_resnet1202}.
\paragraph{Structures.}
The auxiliary classifiers added to the ResNet-1202 backbone network have the macro-structures defined in Fig.\ref{fig:structure_resnet1202}. The number of convolutional layers for every auxiliary classifier can be found in Table \ref{tab:structure_resnet1202}.

\begin{figure}
  \begin{subfigure}{.24\textwidth}
    \centering
    \includegraphics [width=0.95\linewidth]{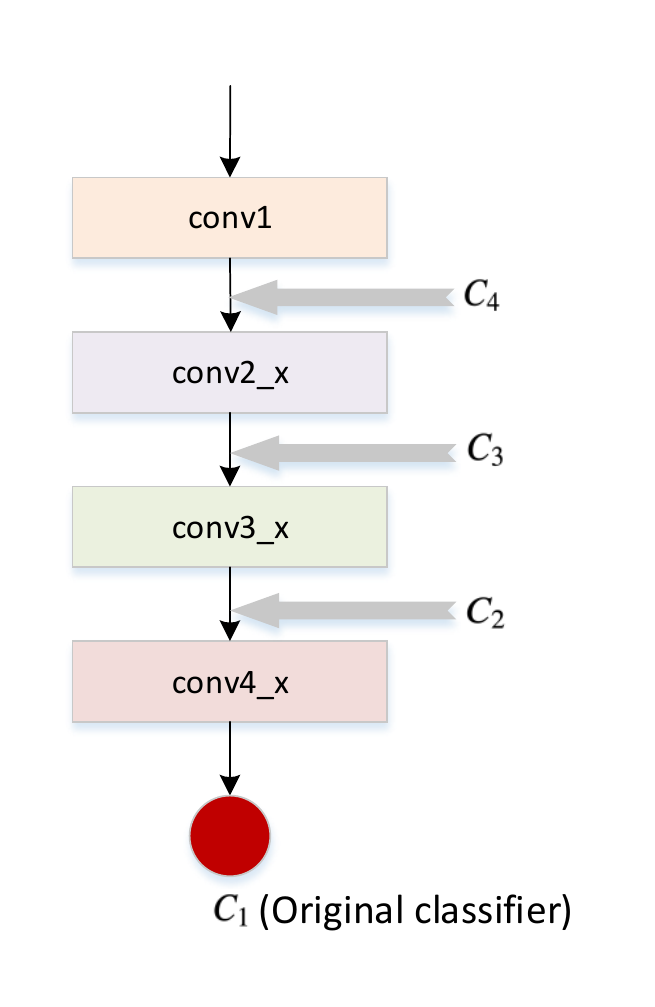}
    \caption{}
    \label{fig:locations_resnet1202}
  \end{subfigure}%
  \begin{subfigure}{.24\textwidth}
    \centering
    \includegraphics [width=0.95\linewidth]{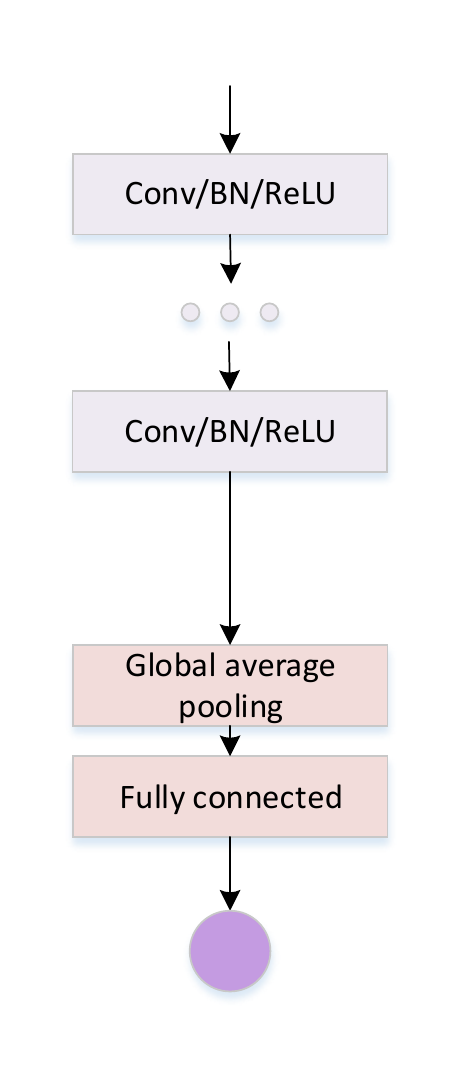}
    \caption{}
    \label{fig:structure_resnet1202}
  \end{subfigure}
\caption{(a) Locations of the auxiliary classifiers added to the ResNet-1202 backbone network evaluated on the CIFAR-100 dataset. The grey thick arrows indicate the layer locations where auxiliary classifiers are added. We denote these three auxiliary classifiers as $C_2$, $C_3$ and $C_4$, respectively. (b) Structure of the auxiliary classifiers. All the convolutional layers in this structure have the same kernel size (= 3$\times$3) and the same stride (= 1), but have different number of filters (yielding different number of output channels). The numbers of convolutional layers and the corresponding filters for every auxiliary classifier can be found in Table \ref{tab:structure_resnet1202}.}
\end{figure}

\begin{table}[]
\centering
\begin{tabular}{|c|c|c|c|}
\hline
& $C_2$ & $C_3$ & $C_4$\\
\hline
conv1 & 512 & 256 & 128\\
\hline
conv2 & -   & 512 & 256\\
\hline
conv3 & -   & -   & 512\\
\hline
\end{tabular}
\caption{Details of the convolutional layers of the auxiliary classifiers added to the ResNet-1202 backbone network evaluated on the CIFAR-100 dataset. In the table, the number in every cell indicates how many filters are in this convolutional layer. For example, $C_3$ has two convolutional layers where the first layer has 256 filters and the second layer has 512 filters.}
\label{tab:structure_resnet1202}
\end{table}

\subsection{Auxiliary Classifiers for ImageNet}
On the ImageNet classification dataset, we use popular ResNet-18, ResNet-50 and ResNet-152~\cite{ref15} as the backbone networks. In this sub-section, we describe the auxiliary classifiers used in the Section 4.2 and some experiments of Section 4.3 of our paper.
\paragraph{Locations.}
In the experiments, we add at most 3 auxiliary classifiers to every backbone network. Following the definition in our paper, we denote the original classifier (i.e., the top-most classifier added to the last layer of a backbone network) as $C_1$ and the auxiliary classifiers as $C_2, C_3$ and $C_4$, as shown in Fig.\ref{fig:locations_imagenet}. Recall that $C_4$ is an extra auxiliary classifier which is added for analyzing the accuracy effect of the increasing number of the auxiliary classifiers, as described in Table 4 of our paper. For the main experiments in the Section 4.2 of our paper, we add 2 auxiliary classifiers (i.e., $C_2$ and $C_3$) to every backbone network.

\begin{figure}
\centering
\includegraphics [width=0.3\textwidth]{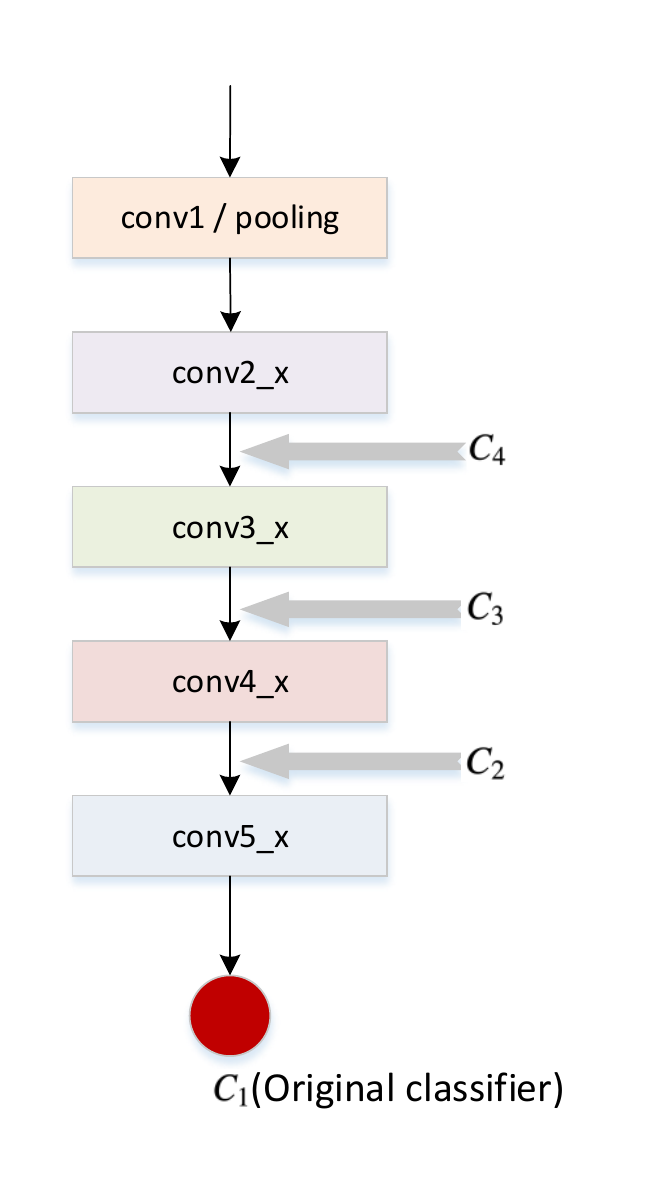}
\caption{Locations of the auxiliary classifiers added to the ResNet backbone networks evaluated on the ImageNet classification dataset. The grey thick arrows indicate the locations where auxiliary classifiers are added. Following the definition in our paper, we denote these three auxiliary classifiers as $C_2$, $C_3$ and $C_4$, respectively.}
\label{fig:locations_imagenet}
\end{figure}

\paragraph{Structures.}
In all of the experiments except for the one regarding Fig.3 `DS with simple aux. classifiers' of our paper, the auxiliary classifiers added to all backbone networks have the same macro-structure. Generally, we design these auxiliary classifiers with the same building blocks as the backbone network. To guarantee that all the paths from the input to different classifier outputs have the same down-sampling process, we design the auxiliary classifiers according to the corresponding building blocks in the backbone network. For example, the auxiliary classifier $C_3$ has its own conv\_4x and conv\_5x blocks acting as down-sampling stages, whose parameter size is smaller than that of the corresponding stages in the backbone network. After these down-sampling stages, there are also a global average pooling layer and a fully connected layer. We show the details of the convolutional blocks of the auxiliary classifiers in Table \ref{tab:structures_imagenet}.

In the experiment regarding Fig.3 `DS with simple aux. classifiers' of our paper, we use a very simple structure as suggested in~\cite{ref22} for the auxiliary classifiers. The structure is shown in Fig.\ref{fig:FC_imagenet}. Specifically, the hyper-parameters of the average pooling layer in $C_2$ are kernel size = $5\times5$, stride = 3 and padding = 1, and in $C_3$ are kernel size = $7\times7$, stride = 7 and padding = 3. The feature map with size of $4\times4\times256$ or $4\times4\times128$ is fed into its respective fully connected layer with a Softmax function for final predication.

\begin{figure}
\centering
\includegraphics [width=0.2\textwidth]{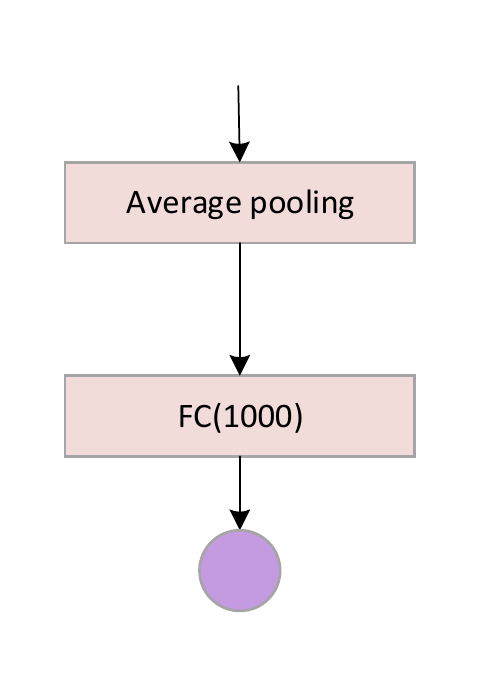}
\caption{Structure of the simple auxiliary classifiers added to the ResNet-18 backbone network evaluated on the ImageNet classification dataset. The average pooling layer will down-sample the input feature map into a new one with the spatial size of 4$\times$4. Then the feature map will be flattened to be a one-dimensional vector which will be fed into the fully connected layer. Here, the purple circle denotes the Softmax layer which will output a probability distribution.}
\label{fig:FC_imagenet}
\end{figure}

\section{Appendix 3: Accuracy Curves of ResNet Models Trained on ImageNet}
Fig.~\ref{fig:curves} shows the curves of Top-1 training error and test error of the ResNet models trained on the ImageNet classification dataset. Compared with the standard training scheme and DS, it can be seen that DKS has worst training accuracy but best test accuracy for all backbone networks, showing better capability to suppress over-fitting during training.

\onecolumn
\begin{figure}
\begin{subfigure}{.33\textwidth}
  \centering
  \includegraphics[width=.95\linewidth]{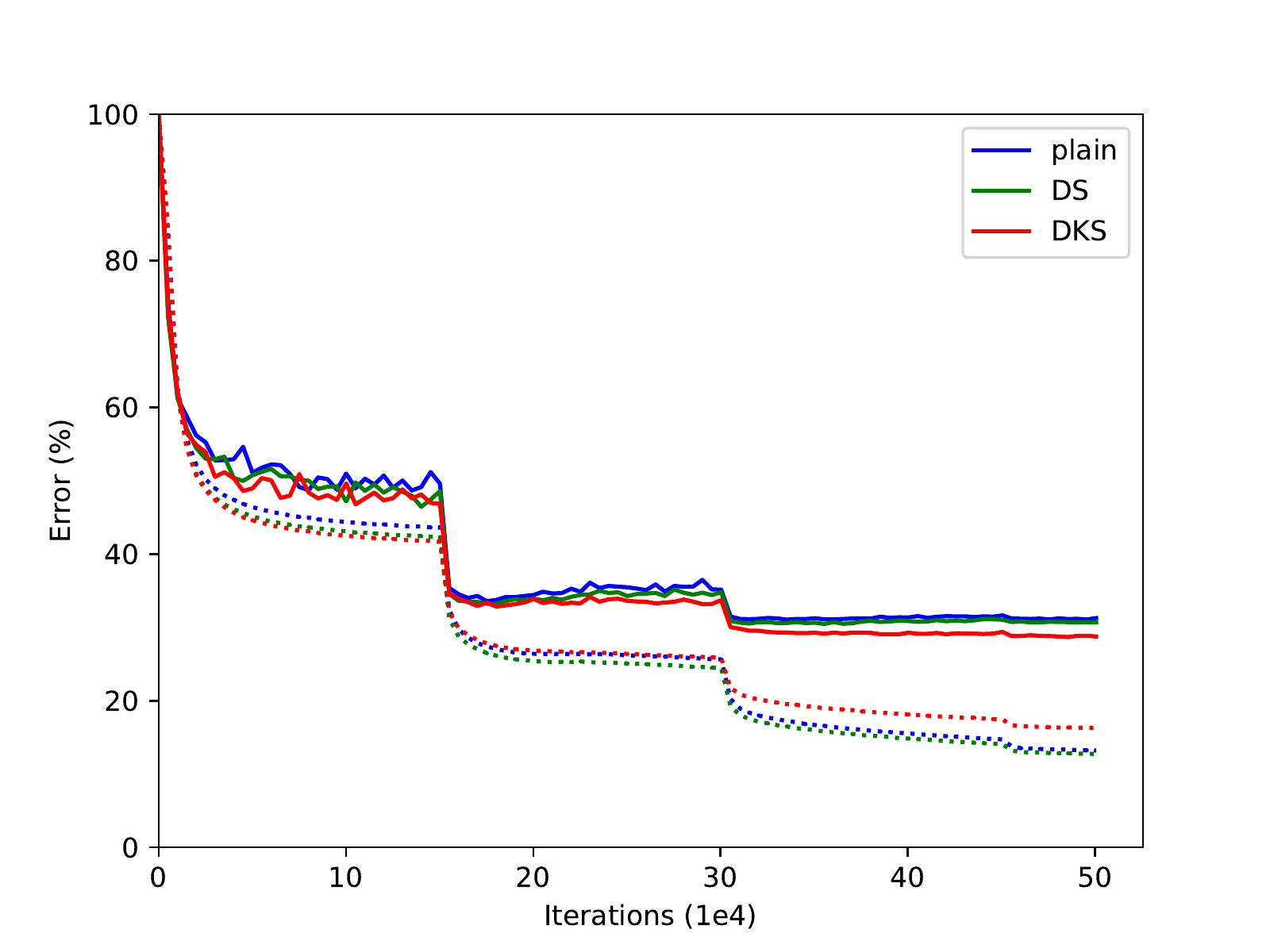}
  \caption{}
\end{subfigure}%
\begin{subfigure}{.33\textwidth}
  \centering
  \includegraphics[width=.95\linewidth]{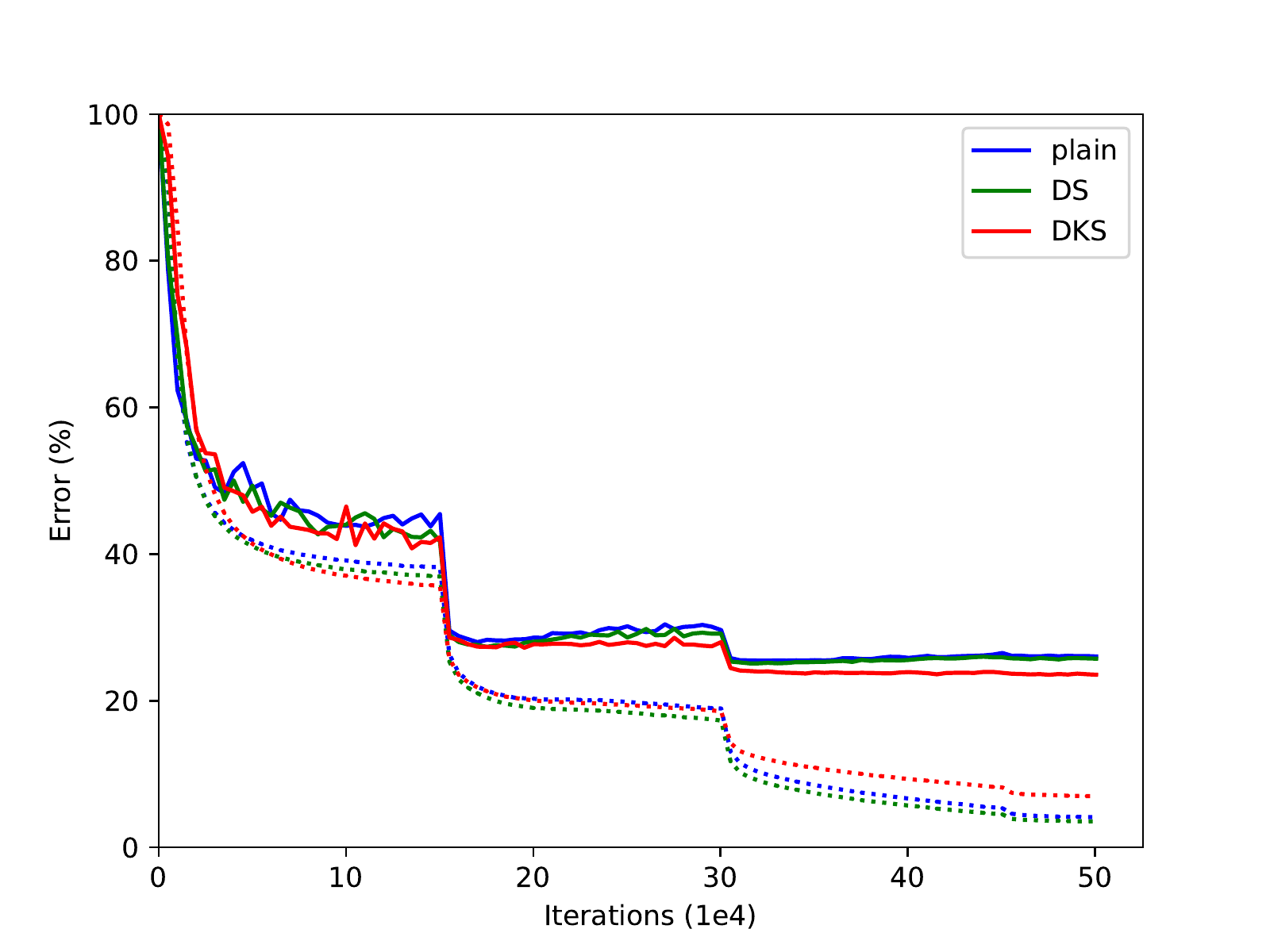}
  \caption{}
\end{subfigure}
\begin{subfigure}{.33\textwidth}
  \centering
  \includegraphics[width=.95\linewidth]{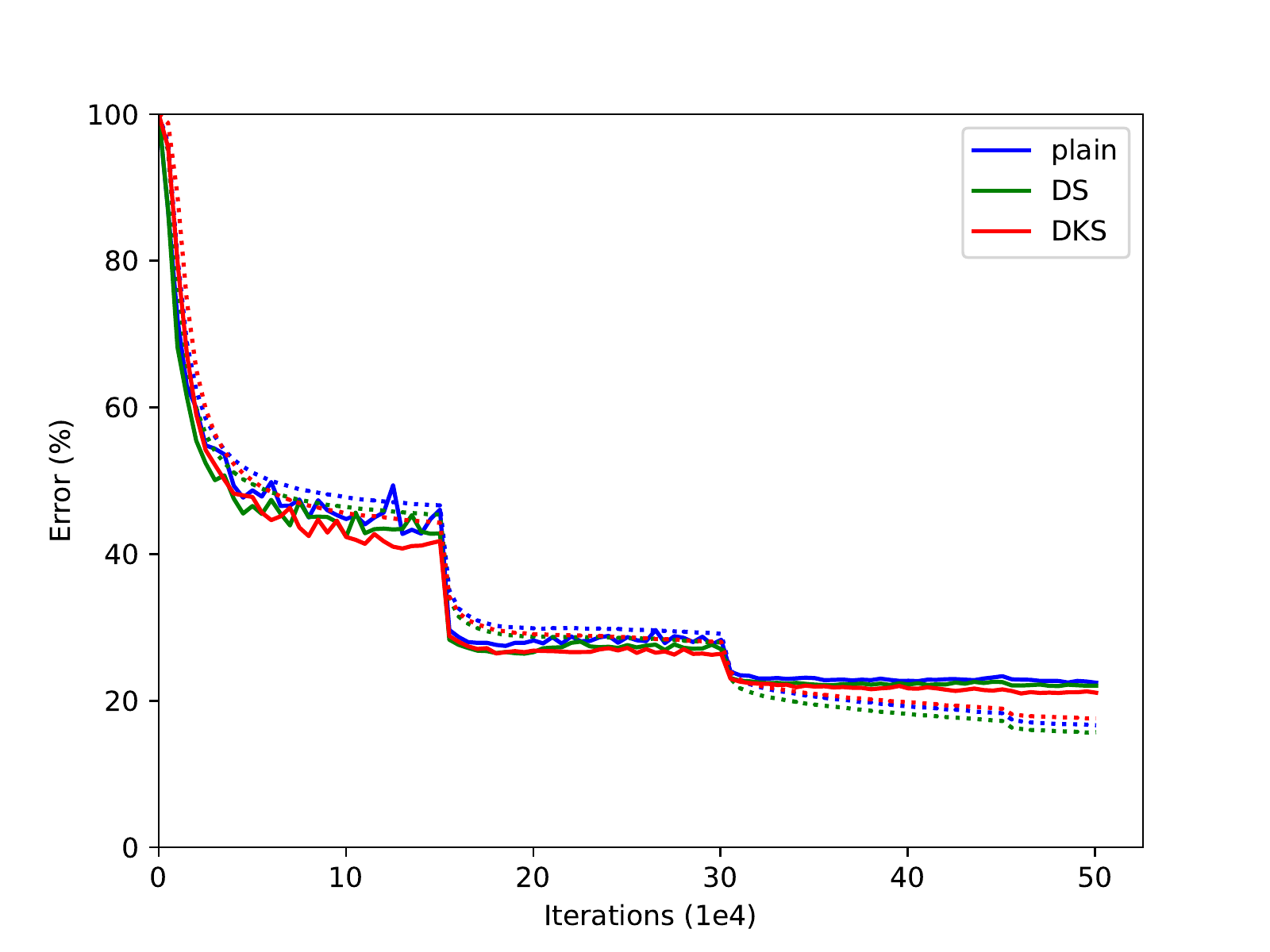}
  \caption{}
\end{subfigure}%
\caption{Curves of Top-1 training error (dashed line) and test error (solid line) on the ImageNet classification dataset with ResNet-18 (a), ResNet-50 (b) and ResNet-152 (c).}
\label{fig:curves}
\vskip 0.1 in
\end{figure}

\begin{table}[]
\centering
\scalebox{0.8}{
\begin{tabular}{|c|c|c|c|c|c|c|}
\hline
 & \multicolumn{3}{c|}{ResNet(d=32)} & \multicolumn{3}{c|}{ResNet(d=110)} \\
\hline
 & $C_1$ & $C_3$ & $C_2$ & $C_1$ & $C_3$ & $C_2$\\
\hline
 conv1 & {$3 \times 3,\,16$} & - & - & {$3 \times 3,\,16$} & - & -\\
\hline
 conv2\_x & {$\begin{bmatrix} 3\times3,\,16 \\ 3\times3,\,16 \end{bmatrix} \times 5$} & - & - & $\begin{bmatrix} 3\times3,\,16 \\ 3\times3,\,16 \end{bmatrix} \times 18$ & - & - \\
\hline
 conv3\_x & {$\begin{bmatrix} 3\times3,\,32 \\ 3\times3,\,32 \end{bmatrix} \times 5$} & {$\begin{bmatrix} 3\times3,\,32 \\ 3\times3,\,32 \end{bmatrix} \times 5$} & - & {$\begin{bmatrix} 3\times3,\,32 \\ 3\times3,\,32 \end{bmatrix} \times 18$} & {$\begin{bmatrix} 3\times3,\,32 \\ 3\times3,\,32 \end{bmatrix} \times 9$} & - \\
\hline
 conv4\_x & {$\begin{bmatrix} 3\times3,\,64 \\ 3\times3,\,64 \end{bmatrix} \times 5$} & {$\begin{bmatrix} 3\times3,\,64 \\ 3\times3,\,64 \end{bmatrix} \times 3$} & {$\begin{bmatrix} 3\times3,\,128 \\ 3\times3,\,128 \end{bmatrix} \times 5$} & {$\begin{bmatrix} 3\times3,\,64 \\ 3\times3,\,64 \end{bmatrix} \times 18$} & {$\begin{bmatrix} 3\times3,\,64 \\ 3\times3,\,64 \end{bmatrix} \times 9$} & {$\begin{bmatrix} 3\times3,\,128 \\ 3\times3,\,128 \end{bmatrix} \times 18$} \\
\hline
\end{tabular}
}
\caption{Details of the convolutional blocks of the auxiliary classifiers added to the ResNet backbone networks evaluated on the CIFAR-100 dataset. In the table, every cell shows the number of building blocks and the corresponding number of output channels.}
\label{tab:CIFAR_structures_resnet}
\vskip 0.1 in
\end{table}

\begin{table}[]
\centering
\scalebox{0.8}{
\begin{tabular}{|c|c|c|c|c|c|c|}
\hline
 & \multicolumn{3}{c|}{WRN-28-4} & \multicolumn{3}{c|}{WRN-28-10} \\
\hline
 & $C_1$ & $C_3$ & $C_2$ & $C_1$ & $C_3$ & $C_2$\\
\hline
 conv1 & {$3 \times 3,\,16$} & - & - & {$3 \times 3,\,16$} & - & -\\
\hline
 conv2\_x & {$\begin{bmatrix} 3\times3,\,64 \\ 3\times3,\,64 \end{bmatrix} \times 4$} & - & - & $\begin{bmatrix} 3\times3,\,160 \\ 3\times3,\,160 \end{bmatrix} \times 4$ & - & - \\
\hline
 conv3\_x & {$\begin{bmatrix} 3\times3,\,128 \\ 3\times3,\,128 \end{bmatrix} \times 4$} & {$\begin{bmatrix} 3\times3,\,128 \\ 3\times3,\,128 \end{bmatrix} \times 4$} & - & {$\begin{bmatrix} 3\times3,\,320 \\ 3\times3,\,320 \end{bmatrix} \times 4$} & {$\begin{bmatrix} 3\times3,\,320 \\ 3\times3,\,320 \end{bmatrix} \times 4$} & - \\
\hline
 conv4\_x & {$\begin{bmatrix} 3\times3,\,256 \\ 3\times3,\,256 \end{bmatrix} \times 4$} & {$\begin{bmatrix} 3\times3,\,256 \\ 3\times3,\,256 \end{bmatrix} \times 2$} & {$\begin{bmatrix} 3\times3,\,512 \\ 3\times3,\,512 \end{bmatrix} \times 4$} & {$\begin{bmatrix} 3\times3,\,640 \\ 3\times3,\,640 \end{bmatrix} \times 4$} & {$\begin{bmatrix} 3\times3,\,640 \\ 3\times3,\,640 \end{bmatrix} \times 2$} & {$\begin{bmatrix} 3\times3,\,1280 \\ 3\times3,\,1280 \end{bmatrix} \times 4$} \\
\hline
\end{tabular}
}
\caption{Details of the convolutional blocks of the auxiliary classifiers added to the WRN backbone networks evaluated on the CIFAR-100 dataset. In the table, every cell shows the number of building blocks and the corresponding number of output channels.}
\label{tab:CIFAR_structures_wrn}
\vskip 0.1 in
\end{table}

\begin{table}[]
\centering
\scalebox{0.8}{
\begin{tabular}{|c|c|c|c|c|c|c|}
\hline
 & \multicolumn{3}{c|}{DenseNet(d=40,k=12)} & \multicolumn{3}{c|}{DenseNet(d=100,k=12)} \\
\hline
 & $C_1$ & $C_3$ & $C_2$ & $C_1$ & $C_3$ & $C_2$\\
\hline
 conv1 & {$3 \times 3,\,24$} & - & - & {$3 \times 3,\,24$} & - & -\\
\hline
 conv2\_x & {$\begin{bmatrix} 3\times3,\,12 \end{bmatrix} \times 12$} & - & - & $\begin{bmatrix} 3\times3,\,12 \end{bmatrix} \times 32$ & - & - \\
\hline
 conv3\_x & {$\begin{bmatrix} 3\times3,\,12 \end{bmatrix} \times 12$} & {$\begin{bmatrix} 3\times3,\,12 \end{bmatrix} \times 12$} & - & {$\begin{bmatrix} 3\times3,\,12 \end{bmatrix} \times 32$} & {$\begin{bmatrix} 3\times3,\,12 \end{bmatrix} \times 16$} & - \\
\hline
 conv4\_x & {$\begin{bmatrix} 3\times3,\,12 \end{bmatrix} \times 12$} & {$\begin{bmatrix} 3\times3,\,12 \end{bmatrix} \times 6$} & {$\begin{bmatrix} 3\times3,\,36 \end{bmatrix} \times 12$} & {$\begin{bmatrix} 3\times3,\,12 \end{bmatrix} \times 32$} & {$\begin{bmatrix} 3\times3,\,12 \end{bmatrix} \times 16$} & {$\begin{bmatrix} 3\times3,\,36 \end{bmatrix} \times 32$} \\
\hline
\end{tabular}
}
\caption{Details of the convolutional blocks of the auxiliary classifiers added to the DenseNet backbone networks evaluated on the CIFAR-100 dataset. In the table, every cell shows the number of building blocks and the corresponding growth rate.}
\label{tab:CIFAR_structures_densenet}
\vskip 0.1 in
\end{table}

\onecolumn
\begin{table}[]
\centering
\scalebox{0.8}{
\begin{tabular}{|c|c|c|c|c|c|c|c|c|c|c|c|}
\hline
$C_1$ & 32 & 64 & 128 & 128 & 256 & 256 & (512,s2) & 512 $\times$ 5 & (1024,s2), 1024\\
\hline
$C_3$ & -  & -  & -   & -   & -   & -   & (512,s2) & 512 $\times$ 3 & (1024,s2), 1024\\
\hline
$C_2$ & -  & -  & -   & -   & -   & -   & -       & -             & (2048,s2), 2048\\
\hline
\end{tabular}
}
\caption{Details of the convolutional blocks of the auxiliary classifiers added to the MobileNet backbone network evaluated on the CIFAR-100 dataset. In the table, every cell shows the number of output channels, and $s2$ denotes the stride of the convolution operation in this layer is 2.}
\label{tab:CIFAR_structures_mobilenet}
\vskip 0.1 in
\end{table}

\begin{table}[]
\centering
\scalebox{0.8}{
\begin{tabular}{|c|c|c|c|c|c|c|c|}
\hline
 & \multicolumn{3}{c|}{Original} & \multicolumn{2}{c|}{Narrow} & \multicolumn{2}{c|}{Shallow} \\
\hline
 & $C_1$ & $C_3$ & $C_2$ & $C_3$ & $C_2$ & $C_3$ & $C_2$\\
\hline
 conv1 & {$3 \times 3,\,16$} & - & - & - & - & - & -\\
\hline
 conv2\_x & {$\begin{bmatrix} 3\times3,\,16 \\ 3\times3,\,16 \end{bmatrix} \times 5$} & - & - & - & - & - & -\\
\hline
 conv3\_x & {$\begin{bmatrix} 3\times3,\,32 \\ 3\times3,\,32 \end{bmatrix} \times 5$} & {$\begin{bmatrix} 3\times3,\,32 \\ 3\times3,\,32 \end{bmatrix} \times 5$} & - & {$\begin{bmatrix} 3\times3,\,16 \\ 3\times3,\,16 \end{bmatrix} \times 5$} & - & {$\begin{bmatrix} 3\times3,\,32 \\ 3\times3,\,32 \end{bmatrix} \times 2$} & - \\
\hline
 conv4\_x & {$\begin{bmatrix} 3\times3,\,64 \\ 3\times3,\,64 \end{bmatrix} \times 5$} & {$\begin{bmatrix} 3\times3,\,64 \\ 3\times3,\,64 \end{bmatrix} \times 3$} & {$\begin{bmatrix} 3\times3,\,128 \\ 3\times3,\,128 \end{bmatrix} \times 5$} & {$\begin{bmatrix} 3\times3,\,32 \\ 3\times3,\,32 \end{bmatrix} \times 3$} & {$\begin{bmatrix} 3\times3,\,64 \\ 3\times3,\,64 \end{bmatrix} \times 5$} & {$\begin{bmatrix} 3\times3,\,64 \\ 3\times3,\,64 \end{bmatrix} \times 1$} & {$\begin{bmatrix} 3\times3,\,128 \\ 3\times3,\,128 \end{bmatrix} \times 2$} \\
\hline
\end{tabular}
}
\caption{Details of the narrow and shallow auxiliary classifiers added to the ResNet-32 backbone network evaluated on the CIFAR-100 dataset. In the table, every cell shows the number of building blocks and the corresponding number of output channels.}
\label{tab:ep_study_FC}
\vskip 0.1 in
\end{table}

\begin{table}[]
\centering
\scalebox{0.6}{
\begin{tabular}{|c|c|c|c|c|c|c|c|}
\hline
 & \multicolumn{3}{c|}{ResNet-18} & \multicolumn{2}{c|}{ResNet-50} & \multicolumn{2}{c|}{ResNet-152} \\
 \hline
 & $C_2$ & $C_3$ & $C_4$ & $C_2$ & $C_3$ & $C_2$ & $C_3$ \\
\hline
conv3\_x & -      & -     & $\begin{bmatrix} 3\times3,\,128 \\ 3\times3,\,128 \end{bmatrix} \times 1$ & -     & -      & -      & - \\
\hline
conv4\_x & -      & $\begin{bmatrix} 3\times3,\,256 \\ 3\times3,\,256 \end{bmatrix} \times 1$ & $\begin{bmatrix} 3\times3,\,256 \\ 3\times3,\,256 \end{bmatrix} \times 1$ & -     & $\begin{bmatrix} 1\times1,\,256 \\ 3\times3,\,256 \\ 1\times1,\,1024 \end{bmatrix} \times 3$  & -      & $\begin{bmatrix} 1\times1,\,256 \\ 3\times3,\,256 \\ 1\times1,\,1024 \end{bmatrix} \times 18$ \\
\hline
conv5\_x & $\begin{bmatrix} 3\times3,\,1024 \\ 3\times3,\,1024 \end{bmatrix} \times 2$ & $\begin{bmatrix} 3\times3,\,512 \\ 3\times3,\,512 \end{bmatrix} \times 2$ & $\begin{bmatrix} 3\times3,\,512 \\ 3\times3,\,512 \end{bmatrix} \times 2$ & $\begin{bmatrix} 1\times1,\,1024 \\ 3\times3,\,1024 \\ 1\times1,\,4096 \end{bmatrix} \times 3$ & $\begin{bmatrix} 1\times1,\,512 \\ 3\times3,\,512 \\ 1\times1,\,2048 \end{bmatrix} \times 2$ & $\begin{bmatrix} 1\times1,\,1024 \\ 3\times3,\,1024 \\ 1\times1,\,4096 \end{bmatrix} \times 3$ & $\begin{bmatrix} 1\times1,\,512 \\ 3\times3,\,512 \\ 1\times1,\,2048 \end{bmatrix} \times 2$ \\
\hline
\end{tabular}
}
\caption{Details of the convolutional blocks of the auxiliary classifiers added to the ResNet backbone networks evaluated on the ImageNet classification dataset. In the table, every cell shows the corresponding number of convolutional blocks (including basic blocks for ResNet-18, and bottleneck blocks for ResNet-50 and ResNet-152) and their parameter sizes. For comparison with the backbone networks, please refer to the Table 1 of the ResNet paper~\cite{ref15}.}
\label{tab:structures_imagenet}
\vskip 0.1 in
\end{table}
~\\  

\end{document}